\documentclass[preprint]{article} 
\usepackage{algorithm}
\usepackage{algorithmic}

\usepackage{booktabs}       
\usepackage{amsfonts}       
\usepackage{amssymb}
\usepackage{nicefrac}       
\usepackage{microtype}      
\usepackage{amsthm}
\usepackage{paralist,subcaption}
\usepackage{mathtools}
\usepackage{fancyvrb}
\usepackage{pgfplots}
\pgfplotsset{compat=1.16}
\newtheorem{theorem}{Theorem}
\newtheorem{lemma}[theorem]{Lemma}

\newtheorem{definition}{Definition}

\usetikzlibrary{arrows, quotes,backgrounds,calc}
\usepackage{tikz}
\usetikzlibrary{arrows.meta}
\usetikzlibrary{arrows,shapes,decorations,automata,backgrounds,petri}
\tikzset{point/.style = {fill=black,circle,inner sep=0.7pt}}
\tikzset{
  graph vertex/.style={
    circle,
    draw,
  },
    graph vertexx/.style={
    draw,
  },
  graph directed edge/.style={
    ->,
    >=stealth,
    thick,
  },
  graph tree edge/.style={
    graph directed edge
  },
  graph forward edge/.style={
    graph directed edge,
    every edge/.style={
      edge node={node [fill=white,font=\scriptsize] {f}},
      loosely dotted,
      draw,
    },
  },
  graph back edge/.style={
    graph directed edge,
    every edge/.style={
      edge node={node [fill=white,font=\scriptsize] {b}},
      densely dotted,
      draw,
    },
  },
  graph cross edge/.style={
    graph directed edge,
    every edge/.style={
      edge node={node [fill=white,font=\scriptsize] {c}},
      dotted,
      draw,
    },
  },
}

\title{Minimal Conditions for Beneficial Neighbourhood Search and Local Descent}

\author{%
  Mark G. Wallace\\
  Dept. of Data Science and Artificial Intelligence\\
  Faculty of Computer Science\\
  Monash University\\
  Clayton, Vic 3168\\
  Australia\\
  \texttt{mark.wallace@monash.edu} \\
}

\begin{document}
\maketitle

\begin{abstract}
This paper investigates what properties a neighbourhood requires to support beneficial local search. 
We show that neighbourhood locality, and a reduction in cost probability towards the optimum, support a proof that  
search among neighbours is more likely to find an improving solution in a single search step than blind search.
This is the first paper to introduce such a proof.
The concepts underlying these properties are illustrated on a satisfiability problem class, and on travelling salesman problems.
Secondly, for a given cost target $t$, we investigate a combination of blind search and local descent termed {\em local blind descent}, and present various conditions under which the expected number of steps to reach a cost better than $t$ using local blind descent, is proven to be smaller than with blind search.
Experiments indicate that local blind descent, given target cost $t$,  should switch to local descent at a starting cost that reduces as $t$ approaches the optimum.  

\end{abstract}

\section{Introduction}
\subsection{Neighbourhood search and local descent}
There is a wide variety of techniques for tackling large scale combinatorial optimisation problems.  
Incomplete search methods are typically used to achieve the required scalability.
Indeed 
\cite{co-2001}
write: ``Most `general-purpose' optimization techniques rely on 
some sort of hill climbing
at the lowest level. 
These
techniques include golden section search, Brent’s method, the downhill simplex method, direction-set methods, conjugate gradient methods, quasi-Newton methods, simulated annealing,  
evolution strategies, evolutionary programming and various types of hill climbers themselves''. 
All these techniques involve a sub-algorithm where a current solution, or set of solutions, are modified in some way to produce new candidate solutions.
We call this neighbourhood search.

We examine the benefit of neighbourhood search 
and therefore why these techniques deploy them.

In our analysis of neighbourhood search, we make the conservative assumption that 
the search for an improving neighbour uniformly at random selects neighbours to evaluate until an improving neighbour is found.

Blind search selects candidate solutions from the search space as a whole, uniformly at random, evaluating the cost of each one until a solution with a desired cost (or better) has been found. 

We say that neighbourhood search performs better than blind search if the probability the next candidate solution has better cost than the current solution is higher than the probability a better solution is selected by blind search.\footnote{Many search algorithms pick neighbours using heuristics rather than picking neighbours blindly: if it can be proven that the heuristic improves the probability of selecting an improving neighbour, then of course our proof that local search is beneficial carries over to this heuristic.}
This is the first paper to show  minimal general conditions on neighbourhoods under which neighbourhood search is expected to outperform blind search.
The result is proven to hold for a range of starting cost levels.

When when neighbourhood search finds an improving solution, it becomes the current point, and \emph{local descent} continues from there.
If there is no such point in the neighbourhood, then local descent has reached either a local optimum or a plateau.

Modern heuristic and metaheuristic methods \cite{alorf2023} include ways to avoid or escape from plateaux and local optima.
However, this paper focusses on the progress made by local descent towards a target cost level, and does not introduce new methods to escape it.

However, even if neighbourhood search had higher probability of improvement, the expected amount of improvement with blind search can be greater than with neighbourhood search. 
Indeed this is illustrated with an example on page \pageref{sec:betterblind}.

This paper addresses local descent in two contexts.
In the first context the current point is the best found so far.  In this context every improving neighbour yields a new best solution.
The question addressed is how fast local descent is expected to improve on the current solution.

In the second context, the current point has a poorer cost than the cost $t$ of the best point found previously.  
In this context an improving neighbour may still have a cost no better than $t$. 
The question to be addressed in the second context is how quickly local descent is expected to find a point with cost better than $t$.

We say local descent is beneficial if, under the chosen measure, local search is better than blind search.
This paper investigates the conditions under which local descent is beneficial in each of the two contexts given above.
In particular, a contribution of this paper is to establish specific conditions under which a 
na\"{\i}ve form of local descent is proven to be beneficial.

\subsection{Structure of the paper}
The related work section 
introduces some previous research related to this paper; 
the next section introduces the properties needed for neighbourhood search to be beneficial,including 
Neighbours Similar Cost (NSC) and gives the formal underpinnings of our proofs;
the section titled "Probability of Improvement"
gives theorems and some proofs that neighbourhood search is beneficial;
the next two sections give theoretical and practical examples (2-SAT and TSP) illustrating and investigating NSC and other properties; 
 the section "Rate of improvement" investigates the rate of improvement with local descent and blind search; the section "Local blind search" analyses the expected number of steps to reach a target cost, assuming NSC;
the final section concludes.

\section{Related Work}
\label{sec:litrev}
\subsection{Cost Function}
For ``blackbox'' optimisation problems, the objective function is unknown, unexploitable or non-existent \cite{blackbox2021}.
This has the same implications as the "No Free Lunch" (NFL) theorems \cite{nofreelunch}, where the objective function is an unknown member of a set of functions, and is revealed only by evaluating its value point by point.
In this paper we call the value of the objective function, applied to a point in a problem's search space, the \emph{cost} of the point.

We assume that, as for blackbox objective functions, the cost of a point cannot be predicted without selecting and evaluating it.
We also assume there is no access to an improving ``direction'' in which neighbours would tend to have better cost than the current solution.
However we admit neighbourhoods with a locality property, discussed in the next section, thereby escaping from the conditions and conclusions of NFL.

\subsection{Locality}
Consider the cost values of neighbours of a current solution. Supposing their distribution is the same as the distribution of cost values in the problem search space as a whole.
In this case neighbourhood search cannot outperform blind search.

To be precise, suppose, for any given current solution 
the probability that a neighbour of that solution has 
any given cost $k$ 
is the same as the probability that any point in the search space has cost $k$.
In this case, 
searching uniformly at random in the neighbourhood of any previous solution is no more likely to yield a solution with cost better than the current 
solution
than evaluating a point selected randomly from the whole search space. 

Hence, the key general condition under which neighbourhood search is beneficial is a locality property. 
Essentially, a candidate solution chosen from the neighbourhood of the current solution is more likely to have a cost similar to the current solution than a candidate solution chosen from the search space as a whole \cite{McDermott_2020}.
This alone is sufficient to escape from the negative conclusions of the NFL theorems \cite{streeter2003}. 

Locality is a property of well-known neighbourhoods used in solving many combinatorial problems.  For example in a maximum satisfiability problem flipping the truth value of a single variable can change only the truth of those clauses in which the variable appears; in a travelling salesman problem a 2-opt changes only the cost of 2 links; and in a graph partitioning problem the swap changes only the edges attached to the swapped node.
In each case only a few of the terms in the objective function are affected, so that the new cost tends to be similar to the previous cost.

Locality is arguably the simplest \emph{useful} condition that a neighbourhood can be constructed to satisfy in a combinatorial problem.
Variants of locality have been introduced in the literature, based on the average cost of neighbours, the maximum difference between neighbours, and the change in cost along paths where successive points are neighbours.

Many studies have focussed on the expected cost value of the neighbours of any point with a given cost.
\cite{grover1992} analysed the average difference between a candidate solution and its neighbours, for five well-known combinatorial optimisation problems. 
Since this difference is positive for candidates with less than average cost it implies that any local optima must have a better than average cost.
These ideas were generalised to ``elementary landscapes'' \cite{elem-landscape-2008}.
If $x$ is an arbitrary element of the problem search space  and $y$ is drawn uniformly at random from the neighbours of $x$, then the expected cost of $y$ is a fixed fraction of the distance between $f(x)$ and $\bar{f}$ - the mean value of $f(s):s \in S$:
$$E[f(y)] = f(x) + (k/d)(\bar{f} - f(x))$$
This supports conclusions about the expected value of neighbours as the search space scales up, and about plateaus in the landscape.
However it does not yield the probability that a neighbour has better cost than the current point, or support conclusions about the benefits of neighbourhood search. 

For each cost difference we constrain the probability that two neighbours differ by this amount.  Specifically we define the property of neighbours' similar cost (NSC) in terms of the increase in probability that neighbours have a cost difference $\delta$ over the probability an arbitrary pair of points in the search space have this cost difference.

Given a measure of distance between points in a problem search space, the $L$-Lipschitz condition imposes the following condition between any pair of points, $x$ and $y$:
$$|f(x) - f(y)| \leq L \times |x-y|$$
\cite{lipschitz-2005}.
Given this condition optimisation results can be proven for various forms of direct search \cite{direct-search-2003}, section 3.  
In this paper we do not have a measure of distance between search space points: only between their cost values.
However a condition on the maximum cost difference between neighbours can be expressed as a form of $L$-Lipschitz condition 
Applying this condition to neighbouring points in a search space, it imposes that given a cost difference $L$, for every point $x$ and neighbour $y$ of $x$:
$$|f(x) - f(y)| \leq L$$
We will investigate the impact of this condition in some benchmark problems on pages \pageref{sec:betterblind} and \pageref{sec:descexp} below.

Many researchers have explored conditions on points connected by a path in which successive points are neighbours.  These include basins and funnels \cite{basins-funnels-2022}, fitness distance \cite{fitness-distance-1995}, auto-correlation \cite{autocorrelation-1990}, niching \cite{niching-1994} among others.
Search algorithms successfully exploiting these properties of landscapes have been investigated, for example subthreshold-seeking local search \cite{subthreshold-2006},
which exploits the number of basins of attraction in a landscape.

For the results of this paper, \emph{no} conditions on paths or on global properties of landscapes such as the number of local optima, are required.

Instead  we define the property of Neighbours' Similar Cost (NSC) in terms of the increase in probability that neighbours have a small cost difference $\delta$ over the probability an arbitrary pair of points in the search space have this cost difference.

\subsection{Local descent}

Over 30 years ago, Johnson et.al. asked ``How easy is neighbourhood search?'' \cite{easy-1988}.  They investigated the complexity of finding locally optimal solutions to NP-hard combinatorial optimisation problems.  They show that, even if finding an improving neighbour (or proving there isn't one) takes polynomial time, finding a  local optimum can take an exponential number of steps.

\cite{tovey1985} considers hill-climbing using flips of $n$ zero-one variables.  If the objective values are randomly generated the number of local optima tends to grow exponentially with $n$, and the expected number of successful flips to reach a local optimum from an arbitrary point grows linearly with $n$.  A more general analysis of neighbourhood search in \cite{tovey-3} explores a variety of algorithms to reach a local optimum.
More recently \cite{cohen2020steepest} showed that this still holds even if the objective is a sum of terms, each comprising no more than seven variables. 
However the number of steps to reach a local optimum does not enable us to infer the number of steps to reach a given cost level.

To compare local descent with blind search we use the expected number of steps to reach a given level of cost.
A step (in both local descent and blind search) is the selection and evaluation of a single point.
This paper is the first to give conditions under which 
the expected number of steps using a na\"{\i}ve version of local descent is lower than under blind search.

\section{Properties needed for neighbourhood search to be beneficial}
\label{sec:nscge}
In the following, for uniformity, we assume that optimisation is cost minimisation. 
We assume a finite range of integer cost values, and without loss of generality, we set the optimum cost $k_{opt}$ to be $0$.

\subsection{Definitions}
The probability a point has a given cost is its cost probability.
The cost probability is directly related to the concept of the \emph{density of states} which applies to continuous cost measures encountered in solid state physics \cite{fitness_distribution}.
That work additionally shows how to estimate the density of states for a problem using Boltzmann strategies.

The expression {\em neighbour of cost $k$} means 
a member of the set of neighbours of points  with cost $k$.  These neighbours typically have costs close to $k$, if the neighbourhoods have the $\mathrm{NSC(k)}$ property.
The neighbour's cost probability is the probability that a neighbour of cost $k_1$ has a given cost $k_2$.

\subsubsection{Neighbourhood search symbols}
\label{local_search_intro}
We introduce the following definitions:
\begin{itemize}
\item  The \emph{cost range} is the set of integers, $K = k_{opt} \ldots k_{max}$, where $k_{opt}=0$ is the optimal cost (or cost), and $k_{max}$ the worst.
\item $p(k)$ is the probability a point has cost $k \in K$.
We extend the range of $p$ by writing $\forall {\rm \ integers \ } k \notin K : p(k)=0$
\item The probability that a neighbour of a cost $k_1 \in K$ has cost $k_2$ is $pn(k_1,k_2)$.
If $k_2 \notin K$ then $pn(k_1,k_2)=0$.
\item If $\delta>0$ then $p(k \pm \delta) = p(k+\delta)+p(k-\delta)$.  $p(k \pm 0) = p(k)$.
Similarly for $pn(k, k \pm \delta)$.

\item $p^<(k)$ is the probability blind search selects a point better than $k$.  
We call it the blind probability of improving:
$$p^<(k) = \sum_{\delta=1}^k p(k-\delta)$$

\item $pn^<(k)$ is the probability neighbourhood search, starting from cost $k$, selects a point better than $k$. 
We call it the neighbourhood probability of improving:
$$pn^<(k) = \sum_{\delta=1}^k pn(k,k-\delta)$$
\end{itemize}

\subsection{Neighbourhood Weight}
\label{sec:nsf}
We have already given examples of neighbourhoods designed for local descent, such a 2-opt, 
where neighbours have similar cost.
One way to formalise the property that neighbours have similar cost, would be ``the probability neighbours have a cost difference of $\delta$ increases with decreasing $\delta$''.
Unfortunately - even where neighbours have similar cost - if the search space has very few points with cost near the optimum $0$, then the probability a neighbour of an optimal point has cost $0 + \delta$ might not increase with decreasing $\delta$.
More generally, despite neighbours having similar cost, if $k_1$ is near the optimum then $pn(k_1, k_1 \pm \delta)$ (the probability a neighbour of a point with cost $k_1$ differs from $k_1$ by $\delta$) might also not increase with decreasing $\delta$.


Consequently, in order to formalise the neighbours similar cost property $\mathrm{NSC(k)}$, we use the \emph{increased} probability a neighbour of a point with cost $k$ has cost $k \pm \delta$ over the cost probability $p(k \pm \delta)$ that an arbitrary point in the search space has cost $k - \delta$ or cost $k + \delta$.

We introduce the function $r(k,\delta)$
which is a weighting associated with cost distance $\delta$ for points in the neighbourhood of any point with cost $k$.
Accordingly
$pn(k,k \pm \delta) = p(k \pm \delta) \times r(k,\delta)$.
We call \emph{r} the {\em NWeight}.
\begin{definition}[NWeight]
The NWeight $r(k,\delta)$ for cost $k$ and cost difference $\delta$ gives probability a neighbour of a point with cost $k$ has cost $k \pm \delta$:
$$ pn(k,k \pm \delta) = p(k \pm \delta) \times r(k,\delta) $$
\end{definition}
Clearly, summing all the disjoint probabilities for a given $k$:
$$\sum_{\delta \in K} pn(k \pm \delta) = \sum_{\delta \in K} (r(k,\delta) \times p(k \pm \delta)) = 1$$
The Neighbourhood Similar Cost $\mathrm{NSC(k)}$ property holds if the NWeight $r(k,\delta)$ increases as $\delta$ decreases.

We made the point earlier that there is no access to an improving ``direction'' in which neighbours would tend to have better cost than the current solution.
Consequently the probability that any neighbour with cost  $c \in \{k+\delta, k-\delta\}$ has cost $k-\delta$ is no different from the probability any point with cost $c \in \{k + \delta, k-\delta\}$ has cost $k-\delta$.
Specifically.
for each cost level $k \in K$, for each cost difference $\delta \in K {\rm \ where\ } p(k \pm \delta) >0$,
$$ \frac{pn(k, k-\delta)}{pn(k,k \pm \delta)} \geq \frac{p(k-\delta)}{p(k \pm \delta)} $$
Using the NWeight $r(k,\delta)$ this is equivalent to the condition
\begin{definition}
The neighbourhood of $k$ is unbiased if
\begin{equation*}
\forall \delta : pn(k, k-\delta) \geq r(k,\delta) \times p(k-\delta) 
\end{equation*}
\end{definition}
We say the neighbourhood of $k$ is positively biased if $pn(k, k-\delta) > r(k,\delta) \times p(k-\delta)$.

The Neighbourhood Similar Cost $\mathrm{NSC(k)}$ property 
introduced in definition \ref{eq:nsf} is a property of a \emph{cost level} rather than a point in the search space.
At a given cost level, some points may have improving neighbours, while other might be local optima.  Unless it is globally optimal, a locally optimal point does not have the same proportion of better and worse neighbours as there are in the search space as a whole. Only on average, over all points at one cost level, does NSC require the proportion of improving neighbours to be as good or better than in the whole search space.
$\mathrm{NSC(k)}$ is defined as follows.
\begin{definition}
$NSC(k)$ holds if the neighbourhood of $k$ is unbiased, and the NWeight $r(k,\delta)$ increases as $\delta$ decreases: \\
\begin{alignat*}{2}
&\forall \delta : pn(k,k-\delta) \ge r(k,\delta) \times p(k-\delta) &\\
&\forall \delta_1 \leq \delta_2 : r(k,\delta_1) \geq r(k,\delta_2) &
\end{alignat*}
 \label{eq:nsf}
\end{definition}
We argue that neighbourhoods designed for local search on combinatorial problems typically have the $\mathrm{NSC(k)}$ property, becoming increasingly positively biased for costs towards the optimum.
We illustrate this below with two neighbourhoods, flipping the truth value of a binary variable in satisfiability problems (2-SAT)
and the 2-swap operator in travelling salesmen problems.
This is the property of locality deployed below in the proofs that local search outperforms blind search.

A simple direct consequence of this definition is that, if $\mathrm{NSC}(k)$ and $r(k,k) \geq 1$, then neighbourhood search starting at a point with fitness $k$ is beneficial.
\begin{proof}
Since, by $\mathrm{NSC}(k)$, $\forall \delta < k : r(k,\delta) \geq r(k,k) \ge 1$, and $pn(k,k-\delta) \ge r(k,\delta) \times p(k-\delta)$ therefore $pn(k,k-\delta) \geq p(k-\delta)$.
Thus $\forall i \in 0..k : pn(k,i) \geq p(i)$
Consequently, by definition:
\begin{equation}
    \label{rkk}
r(k,k) \geq 1 \rightarrow pn^<(k) \geq p^<(k)
\end{equation}
\end{proof}


\subsection{Starting cost}
\label{sec:dec_cost}
Neighbourhood search is unlikely to improve any faster than blind search starting from a very poor cost.  

We therefore consider neighbourhood search from a current solution which is already of reasonably good cost. 

We have seen that neighbourhoods with similar cost have neighbourhood operators that only change a small proportion of terms from an objective function that is the sum of many terms. 
Such objective functions occur in the well-known problems listed by Grover \cite{grover1992} above, as well as most (Polynomial time Local Search) PLS-complete problems \cite{michiels2007}, and
NP-hard problems whose cost is the weighted sum of violated constraints.

For a problem instance with such an objective function, the cost probability typically reduces sharply towards the optimum.
If unconstrained, the optimum is reached when all the terms take their minimum value: there is just one such point.
Then there are ${n \choose x}$ ways that $x$ out of $n$ terms take their minimum value, and this number increases by a factor of $\frac{n-x}{x}$ when $x$ decreases by one.
Thus, as the cost increases away from the optimum, the number of combinations of values that reach that sum increases dramatically, thus increasing its probability.
On the other hand, if there are constraints, which exclude a similar proportion of points at each cost level, the same reduction in cost probability occurs towards the optimum.
For VLSI problems, for example, \cite{white1984} showed that the solution costs have a normal distribution over the interval between their minimal and maximal cost, having few solutions with cost near the extremes.

Our proof of the benefit of neighbourhood search, requires that in the current neighbourhood, the cost probability should be decreasing with cost level towards the optimum.
Specifically such problem classes have a moderate cost level, $k_{mod}$, better than which this thinning out occurs.
\begin{definition}
The cost level $k_{mod}$ is the highest cost below than which $p(k)$ is monotonically decreasing\footnote{In this paper we use \emph{monotonically decreasing} to be synonymous with \emph{monotonically nonincreasing}, and similarly \emph{monotonically increasing} means \emph{monotonically nondecreasing}} with decreasing cost: \\
$\forall k_1 \leq k_2 \leq k_{mod} : p(k_1) \leq p(k_2) $
\end{definition}
For many problems 
$k_{mod}$ lies about halfway between $0$ and $k_{max}$.
However, for a problem class whose cost probabilities are uniform ($\forall i,j :  p(i)=p(j)$), the modal cost is the maximum cost, so $k_{mod}=k_{max}$.

Specifically if $k_c$ is the current cost, for beneficial neighbourhood search we require that $p(k)$ should be monotonically increasing with $k$ in the range $0 \ldots 2*k_{c}$.
For such a cost $k_c$, $ p(k_c+\delta): \delta \leq k_c$ monotonically increases with increasing $\delta$, while $p(k_c -\delta)$ decreases.

\begin{definition}
A cost $k$ is \emph{good enough}, and we write GE($k$), if for all $\delta \leq k$, $p(k -\delta)$ decreases with increasing $\delta$, and $p(k+\delta)$ increases with increasing $\delta$
\end{definition}
Thus GE($k$) holds whenever $2 \times k \leq k_{mod}$.

\subsection{Neighbours with no cost difference}
Neighbours' similar cost includes the chance that neighbours have the same cost.
If $pn(k,k)$ is large enough
(i.e. a high enough proportion of the neighbours of a given cost $k$ also have cost $k$), then neighbourhood search may not outperform blind search.

The proofs in the next section include a limit on this proportion sufficient to ensure neighbourhood search is beneficial.

 \section{Probability of Improvement}
 \label{sec:proofs}
\subsection{Definitions and Lemmas}

Let us write $\bar{r}(k)$ for the average value of $r(k,\delta): \delta \in 1..k$.
This is the average NWeight for cost differences up to the optimum:
$$\bar{r}(k) = \sum_{\delta=1}^k r(k,\delta) / k$$

Secondly let us write $pbr^<(k)$ for the NWeighted probability of selecting a neighbour with cost lower than $k$.
$$pbr^<(k) = \sum_{\delta=1}^k p(k-\delta) \times r(k,\delta)$$
This is the minimum probability a neighbour of cost $k$ is improving, assuming the neighbourhood is unbiased.

\vspace{0.3cm}

For the proof the neighbourhood search is beneficial, we start with three lemmas.
The first is that for a good enough current cost $k$, and assuming neighbourhood similar cost, the neighbourhood probability of improving is greater than the probability of improving with blind search times the average NWeight.
\begin{lemma}
\label{lemma:decdec}
Assuming:
\begin{alignat*}{2}
&2 \times k \leq k_{mod} \ \ \ &(GE)\\
&\forall \delta_1 < \delta_2 : r(k,\delta1) \geq r(k,\delta2) \ \ \ &(NSC)\\ 
&\forall \delta : pn(k,k-\delta) \ge r(k,\delta) \times p(k-\delta) \ \ &(NSC)
\end{alignat*}
it follows that
\begin{equation}
    \label{eq:pbrlow}
    pn^<(k) \geq \bar{r}(k) \times p^<(k)
\end{equation}
\end{lemma}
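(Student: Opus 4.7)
The plan is to reduce the inequality to a standard Chebyshev-style sum inequality. First I would unpack the right-hand side: by the unbiased part of NSC,
$$pn^<(k) \;=\; \sum_{\delta=1}^{k} pn(k,k-\delta) \;\geq\; \sum_{\delta=1}^{k} r(k,\delta)\, p(k-\delta) \;=\; pbr^<(k),$$
so the whole problem collapses to showing $pbr^<(k) \geq \bar{r}(k)\, p^<(k)$.

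Next I would observe that the two sequences indexed by $\delta \in \{1,\dots,k\}$ are \emph{co-monotone}: $r(k,\delta)$ is nonincreasing in $\delta$ by the second NSC condition, and $p(k-\delta)$ is nonincreasing in $\delta$ because $GE$ (via $2k \leq k_{mod}$) gives $k \leq k_{mod}$, so $p$ is monotone on $[0,k]$ and hence $p(k-\delta)$ decreases as $\delta$ increases. Once this co-monotonicity is established, the Chebyshev sum inequality applied to the two sequences yields exactly
$$\frac{1}{k}\sum_{\delta=1}^{k} r(k,\delta)\, p(k-\delta) \;\geq\; \Bigl(\tfrac{1}{k}\sum_{\delta=1}^{k} r(k,\delta)\Bigr)\Bigl(\tfrac{1}{k}\sum_{\delta=1}^{k} p(k-\delta)\Bigr),$$
and multiplying through by $k$ gives $pbr^<(k) \geq \bar{r}(k)\, p^<(k)$, which is what is needed. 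Chaining this with the earlier bound $pn^<(k) \geq pbr^<(k)$ finishes the proof.

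I would expect the routine calculation to be easy; the only conceptual step is recognizing the problem as a Chebyshev sum inequality, and verifying that each of the three hypotheses is used exactly once (unbiasedness to compare $pn$ with $r\cdot p$; the monotonicity of $r$ for one sequence; and $GE$ for the other). If a self-contained proof of Chebyshev is preferred rather than citing it, I would derive it in the standard way by summing the nonnegative terms $\bigl(r(k,\delta_i)-r(k,\delta_j)\bigr)\bigl(p(k-\delta_i)-p(k-\delta_j)\bigr) \geq 0$ over all pairs $i,j$; this is the only step with any potential for index-bookkeeping slips, but it is otherwise mechanical. No subtle issue arises at the boundary $\delta = k$ since $p(0)$ and $r(k,k)$ are both well defined and included in the sums on both sides.
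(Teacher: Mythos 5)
Your proposal is correct and takes essentially the same route as the paper: both arguments first use unbiasedness to reduce the claim to $pbr^<(k) \geq \bar{r}(k)\,p^<(k)$ and then exploit that $r(k,\delta)$ and $p(k-\delta)$ are similarly ordered (nonincreasing in $\delta$, the latter via $k \leq k_{mod}$). The only cosmetic difference is that the paper proves this Chebyshev-type step by hand, picking the pivot index $x$ (the largest $\delta$ with $r(k,\delta) \geq \bar{r}(k)$) and using $(p_\delta - p_x)(r_\delta - \bar{r}) \geq 0$ for every $\delta$, whereas you invoke the classical Chebyshev sum inequality, or equivalently its standard pairwise-product derivation.
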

\noindent The result that 
$pbr^<(k) \geq \bar{r}(k) \times p^<(k)$
is proven in the technical appendix.
Since the neighbourhood is unbiased it follows that:
$pn^<(k) \geq pbr^<(k) \geq \bar{r}(k) \times p^<(k)$.

\vspace{0.3cm}
We now introduce the probability of picking a worse neighbour.
Let us define  $pn^>(k)$ and $pbr^>(k)$ for neighbours with worse cost:
$$pn^>(k) = \sum_{\delta=1}^k pn(k, k+\delta) {\rm \ \& \ } pbr^>(k) = \sum_{\delta=1}^k p(k+\delta) \times r(k,\delta)$$
The second and third lemmas reveal that for a good enough current cost $k$, and assuming neighbourhood similar cost, the probability of selecting worse neighbour is less than than the probability of selecting a worse point with blind search, times the average NWeight.
\begin{lemma}
\label{lemma:incdec}
Assuming:
\begin{alignat*}{2}
&2 \times k \leq   k_{mod} \ \ &(GE)\\
&\forall \delta_1 < \delta_2 : r(k,\delta1) \geq r(k,\delta2)\ \  &(NSC)
\end{alignat*}
it follows that
\begin{equation}
    \label{eq:rhi}
    \forall \delta>k : r(k,\delta) < \bar{r}(k)
\end{equation}
\begin{equation}
    \label{eq:pbrhigh}
pbr^>(k) \leq \bar{r}(k) \times p^>(k)    
\end{equation}
\end{lemma}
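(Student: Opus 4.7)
The plan is to prove the two conclusions in sequence, both exploiting the NSC hypothesis (which makes $r(k,\cdot)$ monotonically nonincreasing in $\delta$) together with the GE hypothesis $2k\leq k_{mod}$ (which, by the definition of $k_{mod}$, forces $p(k+\delta)$ to be monotonically nondecreasing in $\delta$ on the range $\delta\in\{1,\ldots,k\}$, since there $k+\delta\leq 2k\leq k_{mod}$).

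For \eqref{eq:rhi} the argument is immediate. By NSC, $r(k,\delta)\leq r(k,k)$ for every $\delta>k$; whereas $\bar r(k)$ is the arithmetic mean of $r(k,1),\ldots,r(k,k)$, each of which is at least $r(k,k)$, so $\bar r(k)\geq r(k,k)\geq r(k,\delta)$. The strict form asserted in the lemma follows as soon as the NSC decrease of $r(k,\cdot)$ is not everywhere flat, i.e.\ whenever some $r(k,\delta_0)$ with $\delta_0<k$ is strictly larger than $r(k,k)$, which is the nontrivial setting NSC is meant to capture.

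For \eqref{eq:pbrhigh}, I have two oppositely ordered finite sequences indexed by $\delta\in\{1,\ldots,k\}$: the nondecreasing $p(k+\delta)$ and the nonincreasing $r(k,\delta)$. I then invoke Chebyshev's sum inequality for oppositely ordered sequences,
\begin{equation*}
k\sum_{\delta=1}^{k} p(k+\delta)\,r(k,\delta) \;\leq\; \Bigl(\sum_{\delta=1}^{k} p(k+\delta)\Bigr)\Bigl(\sum_{\delta=1}^{k} r(k,\delta)\Bigr).
\end{equation*}
Dividing by $k$ converts the right-hand factor into $\bar r(k)$, yielding $pbr^>(k)\leq \bar r(k)\cdot\sum_{\delta=1}^{k} p(k+\delta)$, which in turn is at most $\bar r(k)\cdot p^>(k)$ because any remaining tail $\sum_{\delta>k} p(k+\delta)$ contributes nonnegatively to $p^>(k)$.

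Substantively, all the real work sits in the single Chebyshev step; the rest is a bookkeeping exercise. Two subtleties are worth flagging as the easiest places to slip. First, GE is used only to secure the monotonicity of $p(k+\delta)$ on the \emph{right} half-range $\delta\in\{1,\ldots,k\}$, since $p$ need not be monotone beyond $k_{mod}$; this is precisely why the sum defining $pbr^>(k)$ is truncated at $\delta=k$ rather than running to $k_{max}-k$. Second, the index range of $p^>(k)$ may legitimately extend past $\delta=k$ into regions where the coupling with $r(k,\delta)$ is no longer controlled, but this causes no trouble since those additional terms are nonnegative and appear on the larger side of the inequality. These two observations are what I would be most careful to state cleanly in the full write-up.
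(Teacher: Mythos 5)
Your proof is correct and is essentially the paper's own argument: the appendix proof of this lemma is exactly a by-hand Chebyshev sum inequality for the oppositely ordered sequences $p(k+\delta)$ (nondecreasing for $\delta \le k$ since $k+\delta \le 2k \le k_{mod}$) and $r(k,\delta)$ (nonincreasing by NSC), carried out via a pivot index $x$ rather than cited as a named inequality, and it likewise treats equation \ref{eq:rhi} as immediate. Two harmless details: the paper defines $p^>(k)$ as exactly $\sum_{\delta=1}^{k}p(k+\delta)$, so your final step is an equality rather than a bound absorbing a tail (the tail lives in $p^{>>}(k)$); and, as you rightly flag, only the non-strict form of \ref{eq:rhi} genuinely follows from the stated hypotheses, which is all that is used downstream in lemma \ref{lemma:apositive}.
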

Result \ref{eq:rhi} follows immediately.
The second result \ref{eq:pbrhigh} is proven in the technical appendix.
Finally, the following lemma follows from our definitions.
\begin{lemma}
\label{lemma:unbiasedk}
If the neighbourhood of $k$ is unbiased, then:
\begin{equation}
\label{eq:unbiashik}
pn^>(k) \leq pbr^>(k)
\end{equation}
\end{lemma}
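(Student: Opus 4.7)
The plan is to prove this lemma by deriving a pointwise (per-$\delta$) companion to the unbiased inequality, then summing. The key observation is that the NWeight definition
$$pn(k, k \pm \delta) = p(k \pm \delta) \times r(k,\delta)$$
combined with the convention $pn(k, k \pm \delta) = pn(k, k+\delta) + pn(k, k-\delta)$ and $p(k \pm \delta) = p(k+\delta) + p(k-\delta)$ immediately gives the identity
$$pn(k, k+\delta) + pn(k, k-\delta) = r(k,\delta) \times p(k+\delta) + r(k,\delta) \times p(k-\delta).$$

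The second step is to apply the unbiased hypothesis, namely $pn(k, k-\delta) \ge r(k,\delta) \times p(k-\delta)$, to the left-hand side of the identity above. Subtracting the unbiased inequality from the identity yields the mirror-image bound on the worsening side,
$$pn(k, k+\delta) \le r(k,\delta) \times p(k+\delta),$$
valid for every $\delta \ge 1$. Intuitively, because the total neighbour mass at the pair of cost levels $\{k-\delta, k+\delta\}$ is fixed by $r(k,\delta)$, any excess probability that the neighbourhood places on the improving side must be compensated by a deficit on the worsening side.

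The third step is simply to sum the pointwise inequality over $\delta = 1, \ldots, k$. Since $pn^>(k) = \sum_{\delta=1}^k pn(k, k+\delta)$ and $pbr^>(k) = \sum_{\delta=1}^k p(k+\delta) \times r(k,\delta)$ are defined with identical ranges of summation, the term-by-term bound transfers directly to $pn^>(k) \le pbr^>(k)$, which is the desired conclusion.

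I do not expect any genuine obstacle here; the lemma is essentially a bookkeeping consequence of the NWeight definition together with the unbiased condition, and unlike Lemmas \ref{lemma:decdec} and \ref{lemma:incdec} it does not require the $\mathrm{GE}$ hypothesis or the monotonicity of $r(k,\delta)$ in $\delta$. The only mild care needed is to be explicit that the summation ranges for $pn^>$ and $pbr^>$ match those on which the pointwise inequality was derived, so that no boundary terms are lost.
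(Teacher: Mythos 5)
Your proof is correct and follows essentially the same route as the paper's: both rest on the identity $pn(k,k+\delta)+pn(k,k-\delta)=r(k,\delta)\,p(k+\delta)+r(k,\delta)\,p(k-\delta)$ from the NWeight definition, combined with the unbiased condition; the only cosmetic difference is that you apply the unbiased inequality per $\delta$ and then sum, whereas the paper sums the identity first and then invokes the aggregated inequality $pn^<(k)\geq pbr^<(k)$.
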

Proven in the technical appendix.

\subsection{Proofs that neighbourhood search is beneficial}
The monotonicity conditions GE and NSC are only needed to prove the two lemmas \ref{lemma:decdec} and \ref{lemma:incdec}.
 The condition GE, that the cost probability is monotonically decreasing towards the optimum, can be violated by a single high cost probability.
 Similarly the condition NSC, that NWeight is monotonically decreasing with increasing cost-difference, can also be violated by a single high NWeight.
 Thirdly an almost unbiased neighbourhood may be violated at a single distance $\delta$.

To prove that neighbourhood search is beneficial we shall therefore use the conclusions of these lemmas, equations \ref{eq:pbrlow}, \ref{eq:rhi}, \ref{eq:pbrhigh} and \ref{eq:unbiashik}, which hold consistently
even in the above cases which strictly violate GE, NSC and/or unbiased.

\noindent The first beneficial neighbourhood theorem:
\begin{theorem}
\label{thm:avrgt1}
Beneficial neighbourhood search when average NWeight $ \geq 1$\\
If equation \ref{eq:pbrlow}  is satisfied and
$\bar{r}(k) \geq 1$
then
$$pn^<(k) \ge p^<(k)$$
\end{theorem}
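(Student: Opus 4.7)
The plan is to observe that the theorem is an immediate consequence of chaining the two hypotheses. Equation \ref{eq:pbrlow} gives $pn^<(k) \geq \bar{r}(k) \times p^<(k)$, and the additional assumption is exactly $\bar{r}(k) \geq 1$. Since $p^<(k)$ is a probability, hence nonnegative, multiplying both sides of $\bar{r}(k) \geq 1$ by $p^<(k)$ preserves the inequality, yielding $\bar{r}(k) \times p^<(k) \geq p^<(k)$. Transitivity through equation \ref{eq:pbrlow} then gives the conclusion $pn^<(k) \geq p^<(k)$.

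There is essentially no obstacle here: the theorem is a direct composition of an already-proved lemma (Lemma \ref{lemma:decdec}, whose conclusion is equation \ref{eq:pbrlow}) with a numerical assumption. The only thing worth remarking on is that the monotonicity conditions GE and NSC are not used directly in this step; they are absorbed into the hypothesis that equation \ref{eq:pbrlow} holds, which is precisely the reason the authors stated the theorem using that equation rather than its stronger antecedents. This matches the paragraph preceding the theorem, which explains that the conclusions of the lemmas can hold even when GE, NSC, or unbiasedness are violated in isolated places, so stating the theorem in this form maximises its applicability.

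I would present the proof as a single displayed chain:
\begin{equation*}
pn^<(k) \;\geq\; \bar{r}(k) \times p^<(k) \;\geq\; 1 \times p^<(k) \;=\; p^<(k),
\end{equation*}
where the first inequality is equation \ref{eq:pbrlow} and the second uses $\bar{r}(k) \geq 1$ together with $p^<(k) \geq 0$. No further manipulation is needed.
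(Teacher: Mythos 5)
Your proof is correct and is essentially identical to the paper's own argument, which also just chains equation \ref{eq:pbrlow} with $\bar{r}(k) \geq 1$ (and the nonnegativity of $p^<(k)$) to get $pn^<(k) \geq \bar{r}(k) \times p^<(k) \geq p^<(k)$. Your remark that GE, NSC and unbiasedness enter only through the hypothesis \ref{eq:pbrlow} matches the paper's stated intent for phrasing the theorem this way.
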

\begin{proof} of theorem \ref{thm:avrgt1}\\

$$pn^<(k) \geq \bar{r}(k) \times p^<(k) \ge p^<(k)$$
\end{proof}

In case $\bar{r}(k) \geq 1$, above, there is no limit on the value of $pn(k,k)$.
We next tackle the case $\bar{r}(k) < 1$.
In this case there may be a high proportion of neighbours with the same cost as the current point - in short $pn(k,k)$ may be high.
Assuming the consequence of lemma \ref{lemma:incdec}, 
we can infer a limit on $pn(k,k)$ below which
$pn^<(k) \geq p^<(k)$.
In particular if $pn(k,k) \leq p(k)$, the result follows.

Let us write
\begin{alignat*}{3}
&p^>(k) && {\rm \ \ for \ \ } \sum_{\delta=1}^k p(k+\delta) &\\
&p^{>>}(k)&& {\rm \ \ for \ \ } \sum_{\delta > k} p(k+\delta) &\\
\end{alignat*}

\vspace{-0.5cm}

and define $pn^{>>}(k), pbr^{>>}(k)$ similarly.

\noindent Then
\begin{equation*}
    \begin{split}
1 & = \ \ p^<(k)+p^>(k)+p^{>>}(k)+p(k) \\
& = \ \ pn^<(k)+pn^>(k)+pn^{>>}(k)+pn(k,k)   
    \end{split}
\end{equation*}
so 
\begin{equation}
\label{eq:pngtggt}
\begin{split}
pn^<(k) = \ \ & p^<(k) + (p^>(k)-pn^>(k)) + \\
& (p^{>>}(k)-pn^{>>}(k)) + (p(k)-pn(k,k))
\end{split}
\end{equation}
\begin{definition}
$a^>$ and $a^{>>}$\\
{\rm In the light of the above equation, we define \ } $a^>(k)$ {\rm \ and \ } $a^{>>}(k)$ {\rm \ as follows:}\\
$a^>(k) = (p^>(k)-pn^>(k))$\\
$a^{>>}(k) = (p^{>>}(k) - pn^{>>}(k))$
\end{definition}
\begin{lemma}
\label{lemma:apositive}
If 
equations \ref{eq:rhi}, \ref{eq:pbrhigh} and \ref{eq:unbiashik} all hold, and
$\bar{r}(k) < 1$
then
$$a^>(k)+a^{>>}(k) \geq 0$$
\end{lemma}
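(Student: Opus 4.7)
The plan is to prove the stronger statement that each of $a^>(k)$ and $a^{>>}(k)$ is individually non-negative under the given hypotheses; their sum then follows trivially. The two summands behave differently: $a^>(k)$ follows directly by chaining the three given equations, whereas $a^{>>}(k)$ requires a separate argument because no equation in the hypotheses directly bounds $pn^{>>}(k)$.

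For the first part, I would simply compose the inequalities. By equation \ref{eq:unbiashik}, $pn^>(k) \leq pbr^>(k)$; by equation \ref{eq:pbrhigh}, $pbr^>(k) \leq \bar{r}(k) \times p^>(k)$; and since $\bar{r}(k) < 1$ by assumption, $\bar{r}(k) \times p^>(k) \leq p^>(k)$. Stringing these together gives $pn^>(k) \leq p^>(k)$, i.e. $a^>(k) \geq 0$.

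For the second part, the key observation is that when $\delta > k$ the lower end of the symmetric cost pair falls outside the cost range, so $p(k-\delta)=0$ and likewise $pn(k,k-\delta)=0$. The NWeight definition $pn(k,k\pm\delta) = p(k\pm\delta)\times r(k,\delta)$ therefore collapses at these $\delta$ to $pn(k,k+\delta) = p(k+\delta)\times r(k,\delta)$. Summing over $\delta>k$ and invoking equation \ref{eq:rhi}, which gives $r(k,\delta) < \bar{r}(k)$ in this range, yields $pn^{>>}(k) \leq \bar{r}(k)\times p^{>>}(k) \leq p^{>>}(k)$ (again using $\bar{r}(k)<1$), so $a^{>>}(k)\geq 0$.

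Adding the two inequalities gives the conclusion. The main obstacle is really just conceptual: recognising that equation \ref{eq:rhi} is needed specifically to handle the $>>$ regime (because $pbr$ and $pn^>$ sum only up to $\delta=k$), and that the vanishing of $p$ outside $K$ is what lets the NWeight identity be applied one-sidedly for $\delta>k$. Everything else is a direct chain of bounds, so no nontrivial calculation is required.
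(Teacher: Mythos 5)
Your proposal is correct and follows essentially the same route as the paper's own proof: $a^>(k)\geq 0$ by chaining equations \ref{eq:unbiashik} and \ref{eq:pbrhigh} with $\bar{r}(k)<1$, and $a^{>>}(k)\geq 0$ by noting $p(k-\delta)=0$ for $\delta>k$, so the NWeight identity gives $pn(k,k+\delta)=r(k,\delta)\times p(k+\delta)$, combined with equation \ref{eq:rhi} and $\bar{r}(k)<1$. No gaps; this matches the paper's argument step for step.
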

\begin{proof} of lemma \ref{lemma:apositive}\\
By equation \ref{eq:unbiashik}, $pn^>(k) \leq pbr^>(k)$ and, by equation \ref{eq:pbrhigh}, since $\bar{r}(k) < 1$, then \ \ $pbr^>(k) \leq p^>(k)$, and it follows that:
$$a^>(k) = (p^>(k)-pn^>(k))  \geq 0$$
Moreover, if $\delta>k$ then $p(k-\delta)=0$ which means \\
$pn(k,k+\delta)= p(k+\delta) \times r(k,\delta)$.\\
By equation \ref{eq:rhi} and by assumption,\\
$\forall \delta>k : r(k,\delta)\leq \bar{r}( k) < 1 $\\
and therefore
$\forall \delta>k : pn(k,k+\delta) < p(k+\delta)$\\
so
$p^{>>}(k) \geq pn^{>>}(k)$.\\
Consequently:
$$a^{>>}(k)  = (p^{>>}(k) - pn^{>>}(k)) \geq 0 $$
\end{proof}

The second beneficial neighbourhood theorem follows.  This theorem shows that for a good enough cost $k$, if $\mathrm{NSC}(k)$ holds, then local search is beneficial unless too many neighbours have the same cost $k$.  Indeed equation \ref{eq:weak_cond} gives a bound on this number.
\begin{theorem}
\label{thm:improvement}
Beneficial neighbourhood search when average NWeight $<1$\\
If equations \ref{eq:rhi}, \ref{eq:pbrhigh}, \ref{eq:unbiashik} all hold and $p(k) \geq pn(k,k)$ then even if $\bar{r}(k) < 1$
$$pn^<(k) \ge p^<(k)$$
\end{theorem}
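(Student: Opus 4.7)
The plan is to read off the result directly from the bookkeeping identity \ref{eq:pngtggt} once Lemma \ref{lemma:apositive} is in hand. The identity decomposes $pn^<(k)$ as
\[
pn^<(k) \;=\; p^<(k) + a^>(k) + a^{>>}(k) + \bigl(p(k)-pn(k,k)\bigr),
\]
so to conclude $pn^<(k)\ge p^<(k)$ it is enough to show that the three ``error'' terms on the right are collectively nonnegative.

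First I would invoke Lemma \ref{lemma:apositive}. Its hypotheses are exactly equations \ref{eq:rhi}, \ref{eq:pbrhigh}, \ref{eq:unbiashik} (which the theorem assumes) together with $\bar r(k)<1$, which is the regime this theorem addresses. Applying it gives $a^>(k)+a^{>>}(k)\ge 0$.

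Next I would use the extra hypothesis of the theorem, $p(k)\ge pn(k,k)$, to conclude $p(k)-pn(k,k)\ge 0$. Substituting into the identity above,
\[
pn^<(k) \;=\; p^<(k) + \underbrace{\bigl(a^>(k)+a^{>>}(k)\bigr)}_{\ge 0} + \underbrace{\bigl(p(k)-pn(k,k)\bigr)}_{\ge 0} \;\ge\; p^<(k),
\]
which is the desired inequality.

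In terms of where the work sits, there is almost no obstacle left once the earlier lemmas are granted: Lemma \ref{lemma:apositive} did the nontrivial job of showing that the probability mass that neighbourhood search removes from the ``worse'' region exceeds what it adds there, and the new hypothesis $p(k)\ge pn(k,k)$ handles the only remaining worry, namely that an excess of same-cost neighbours could absorb all of this gain before it reached the improving region. The only thing to be mildly careful about is the consistency of hypotheses (equations \ref{eq:rhi}, \ref{eq:pbrhigh}, \ref{eq:unbiashik} are assumed as conclusions, not via their original GE/NSC/unbiased premises, matching the paper's earlier remark that the proofs are stated in terms of the equation-level conclusions so they survive local violations of the monotonicity conditions).
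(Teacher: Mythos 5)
Your proposal is correct and follows essentially the same route as the paper: decompose $pn^<(k)$ via the identity in equation \ref{eq:pngtggt}, apply Lemma \ref{lemma:apositive} (under equations \ref{eq:rhi}, \ref{eq:pbrhigh}, \ref{eq:unbiashik} and $\bar{r}(k)<1$) to get $a^>(k)+a^{>>}(k)\ge 0$, and use the hypothesis $p(k)\ge pn(k,k)$ for the remaining term. No gaps; this matches the paper's proof.
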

\begin{proof} of theorem \ref{thm:improvement}\\
From equation \ref{eq:pngtggt} above, we have:
\begin{equation*}
    \begin{split}
pn^<(k) = & \  p^<(k) + (p^>(k)-pn^>(k)) + \\
& \ (p^{>>}(k)-pn^{>>}(k)) + (p(k)-pn(k,k))       
    \end{split}
\end{equation*}

so, by definition:
$$pn^<(k) = p^<(k) + a^>(k) + a^{>>}(k) + (p(k)-pn(k,k))$$
By lemma \ref{lemma:apositive}, $a^>(k)+a^{>>}(k) \ge 0$, and by assumption $p(k) - pn(k,k) \geq 0$
therefore:
$$pn^<(k) \geq p^<(k)$$
\end{proof}
Note that the same proof shows that, even if $\bar{r}(k) < 1$, the neighbourhood search is beneficial under the weaker condition that
\begin{equation}
(pn(k,k)-p(k)) \leq  (a^>(k) + a^{>>}(k)) \label{eq:weak_cond} 
\end{equation}

In general, the value of $a^>(k)+a^{>>}(k)$ is far larger than the value of $pn(k,k)-p(k)$ because they sum the difference between $pn(k,\delta)$ and $p(k+\delta)$ over the whole range of $\delta \in 0 \ldots k_{max}-k$, whereas $pn(k,k)-p(k)$ is simply this difference when $\delta=0$.

\section{Calculating neighbourhood properties for a problem class}
\label{sec:2SAT}

In this section we take a very simple example of a problem class, and show how we can infer its specification and properties.
In particular we show it has the Neighbour Similar Cost property at all costs from $0$ to the modal cost.

The class is a subclass of MAX-2-SAT, where there is a given number of variables and clauses.

Since the same neighbourhood operator --- flipping a boolean --- applies to all instances of this class, we can model the search for an unknown instance of the class.
Assuming the instance is drawn uniformly at random from the class,
the expected values of $p$, $pn$ and $r$ are the same as for the class as a whole.

Specifically we take the class of MAX-2-SAT problems with $50$ variables and $100$ 2-variable clauses, in which each variable appears in exactly $4$ distinct clauses.
Each variable can take the value $true$ or $false$, so there are $2^{50}$ candidate solutions.
The cost of a solution is the number of violated constraints, so the range of cost values is $0 \ldots 100$.
This completes the specification of our example problem class.

Based on the above specification we calculate $p$, $k_{mod}$, $pn$ and $r$.

Each clause is true with probability $3/4$ and false with probability $1/4$.
The probability that $C$ clauses are false is 
$$p(C) = (1/4)^C \times (3/4)^{(100-C)} \times {C \choose 100}$$
The most likely cost is $p(25) = 0.092$, and this is the modal cost $k_{mod}$.
The probabilities are shown in figure \ref{fig:sat2p}
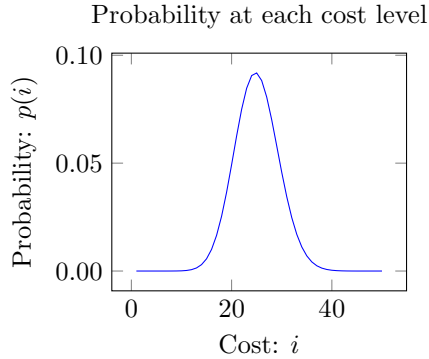
\begin{figure}[htb]
\centering
\begin{tikzpicture}
\begin{axis}[
width=5.5cm,
title= Probability at each cost level,
scaled ticks = false,
xlabel={Cost: $i$},
ylabel={Probability: $p(i)$},
y tick label style={
                /pgf/number format/fixed,
                /pgf/number format/fixed zerofill,
                /pgf/number format/precision=2},
]
\addplot [blue] [solid]  table {sat2p.dat};
\end{axis}
\end{tikzpicture}
\caption{MAX-2-SAT probability at each cost level}
\label{fig:sat2p}
\end{figure}

The neighbours of a solution result from flipping the value of a single variable.  Since a variable only appears in $4$ clauses,
$pn(C,\delta)=0$ for all $C$ and any $\delta>4$.
Flipping a variable in a clause that is false always makes it true - thus increasing the cost by $1$.
Flipping a variable in a true clause makes it false with a probability of $1/3$.
If the current cost is $C$, the probability that the variable to be flipped is in a false clause is $C/100$, and a for true clause it is $(100-C)/100$.  Thus, for example,
$pn(C,C-4) =$\\
$ (C/100)*((C-1)/99)*((C-2)/98)*((C-3)/97)$

For this problem class and neighbourhood operator, we find that neighbourhoods are unbiased for cost values lower than the modal cost $25$.
Writing $posr(C,\delta) = pn(C,\delta) - (p(C-\delta)*r(C,\delta)$,
then, by definition, the neighbourhood of $C$ is unbiased if and only if $posr(C,\delta)$ is zero or positive for all values of $\delta$.
The values for $r(17,\delta)$, and for $posr(17,\delta)$
 are given in table \ref{tab:2satunskew}.
This reveals that the $NSC(17)$ property holds: $r(17,\delta)$ decreases with increasing $\delta$ and that the neighbourhood is positively biased.
\begin{table}[!ht]
    \centering
    \begin{tabular}{|c||c|c|}
    \hline
        $\delta$ & $r(17,\delta)$ & $posr(17, \delta)$ \\
    \hline 
         1 & 12.5 & 0.045  \\
         2 &  5.0 & 0.029  \\
         3 &  1.1 & 0.006  \\
         4 &  0.1 & 0.001  \\
    \hline
    \end{tabular}
    \caption{Probabilities $r(17,\delta), posr(17,\delta)$}
    \label{tab:2satunskew}
\end{table}
Above the modal cost $25$ the neighbourhoods are negatively biased.

By theorem \ref{thm:avrgt1} neighbourhood search is beneficial from a cost $k$ if $\bar{r}(k)$ (the average value of $r(k,\delta): \delta \in 1..K$) is greater than $1$.
Table \ref{tab:avr} gives the values for $\bar{r}(k)$.
\begin{table}[!ht]
    \centering
    \begin{tabular}{|c||c|c|c|c|c|c|}
    \hline
        $k$ & 20 & 17 & 14 & 11 & 8 & 5 \\ 
    \hline 
         $\bar{r}(k)$ & 0.36 & 1.10 & 6.15 & 70.3 & 1950 & 178,000 \\
    \hline
    \end{tabular}
    \caption{$\bar{r}(k)$ - average value of $r(k,\delta)$}
    \label{tab:avr}
\end{table}
Clearly neighbourhood search is beneficial starting at costs of $17$ or better.

For completeness we give the values for $t(k) = pn(k,k)-p(k)$ and for $a(k) = a^>(k)+a^{>>}(k)$ for $k$ up to a cost of $26$ 
in table \ref{tab:2satbeneficial}.
Neighbourhood search is beneficial if $t(k) < a(k)$, so it is beneficial even at costs of $20$ and $23$ where $\bar{r}(k) < 1$:

\begin{table}[!ht]
    \centering
    \begin{tabular}{|c||c|c|c|c|c|c|c|}
    \hline
        $k$ & 26 & 23 & 20 & 17 & 14 & 11 & 8\\ 
    \hline 
         t(k) & 0.19 & 0.20 & 0.23 & 0.27 & 0.28 & 0.27 & 0.26 \\ 
         a(k) & 0.01 & 0.24 & 0.41 & 0.48 & 0.47 & 0.42 & 0.36 \\ 
    \hline
    \end{tabular}
    \caption{Neighbourhood search beneficial if $t(k)<a(k)$}
    \label{tab:2satbeneficial}
\end{table}

\vspace{-0.5cm}
\section{Applying theory to practice}
\label{sec:tsp}
To explore the implications of this theory we generated a travelling salesman instance small enough that we could generate all solutions.
Using 2-opt as the neighbourhood operator, we investigated the NSC properties, and the density of solutions.
For this small example we established that the conditions of theorem \ref{thm:avrgt1} hold, and neighbourhood search is beneficial.

For such real problems, the optimal cost $k_{opt}$ may not be $0$.
Accordingly we adapt the definitions of $p^<(k), pn^<(k), \bar{r}(k)$ in the obvious way.
For example $\bar{r}(k) = \sum_{i=1}^{k-k_{opt}}r(k,i) / (k-k_{opt})$.

\subsection{A 10 city TSP}
We generated a single 10 city TSP, with inter-city edge lengths are randomly generated in the range 1 to $25$, and generated the values for $p(k)$. 
The optimum cost was $k_{opt} = 85$ the modal cost $k_{mod} = 119$.
The furthest cost from the optimum where $GE(k)$ holds was $k_{ge} = max({k : k+(k-k_{opt}) \leq k_{mod}}) = 102$.

The conditions for theorem \ref{thm:avrgt1}, which establish that neighbourhood search at cost $k$ is beneficial 
are:
\begin{enumerate}
\itemsep0em
    \item $\bar{r}(k) \geq 1$ 
    \item $pn^<(k) \geq \bar{r}(k) \times p^<(k)$
\end{enumerate}
We evaluated each of these properties for our TSP 10 problem instance.\\
(1) We show $\bar{r}(k)$, for all costs $k \in k_{opt}+1 \ldots k_{ge}$, the range within which neighbourhood search is expected to be beneficial.\\
(2) We show the values of $pn^<(k)$ and of $\bar{r}(k) \times p^<(k)$ over the same range.

The left hand side of figure \ref{fig:instance10nsf} shows that $\bar{r}(k) \geq 1$ over the whole cost range.
The right hand side shows that also $pn^<(k) \geq \bar{r}(k) \times p^<(k)$ is satisfied over this range.

\begin{figure}[hbt]
\pgfplotsset{
    every axis legend/.append style={
    at={(1,1)},
    anchor=north east,
                                    },
            }
\begin{tikzpicture}
\begin{axis}[
width=5.5cm,
title= Average NWeight,
scaled ticks = false,
xlabel={cost $k$},
ylabel={$\bar{r}(k)$},
ymode=log,log basis y=10,
y tick label style={
                /pgf/number format/fixed,
                /pgf/number format/precision=1},
]
\addplot [green] table {tsp_multi_av_r10251.dat};
\end{axis}
\end{tikzpicture}
\hspace{10pt}
\begin{tikzpicture}
\pgfplotsset{
    every axis legend/.append style={
    at={(0,1)},
    anchor=north west,
                                    },
            }
\begin{axis}[
width=5.5cm,
title= {$pn^<(k) \geq \bar{r}(k) \times p^<(k)$},
scaled ticks = false,
xlabel={instance - value of $k$},
yticklabel style={
        /pgf/number format/fixed,
        /pgf/number format/precision=2
},
legend entries={$pn^<(k)$,$\bar{r}(k) \times p^<(k)$},
]
\addplot [blue] [dashed] table 
{tsp_multi_pn_low10251.dat};
\addplot [green] [solid] table {tsp_multi_pavr_low10251.dat};
\end{axis}
\end{tikzpicture}
\caption{TSP 10 satisfies conditions for theorem \ref{thm:avrgt1}}
\label{fig:instance10nsf}
\end{figure}
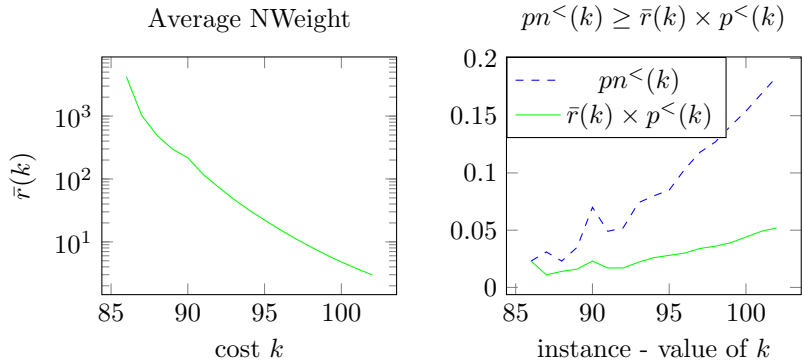

Thus both conditions for beneficial neighbourhood search are satisfied for this TSP10 instance.

\subsection{Other and larger TSPs}
It was proven (equation \ref{rkk} above) that if $r(k,k-k_{opt}) \geq 1$, then neighbourhood search is beneficial at starting cost $k$.

100 TSP10 instances were generated, with edge lengths chosen randomly in the range 
$1 \ldots 25$, and in every TSP10 instance, for every starting cost $k \in k_{opt} \ldots k_{ge}$,
either $r(k,k-k_{opt}) \geq 1$ or properties 1 and 2 were computed.
In every case they proved to be satisfied.

Naturally for larger problems, the decrease in cost probability towards the optimum, and the decrease in NWeight for increasing cost difference is closer to strict monotonicity.  
For illustration we sampled 400000 points and their neighbourhoods in an 80 city TSP.

We generated a single 80 city TSP, with inter-city edge lengths randomly generated in the range 1 to $25$, and inferred the values for $p(k)$ by sampling.
The optimum cost was $k_{opt} = 897$ the modal cost $k_{mod} = 1039$.
The furthest cost from the optimum where we still expect $NSC(k)$ to hold is $max({k : k+(k-k_{opt}) \leq k_{mod}}) = 968$.
We then checked that NWeight $r(968,\delta)$ decreases as $\delta$ increases from 1 to $50$, which is the maximum possible cost change from a 2-swap.

The results are shown in figure \ref{fig:whitetspct80}.
\begin{figure}[!ht]
\begin{tikzpicture}
\begin{axis}[
width=5.5cm,
title= Cost range probability,
scaled ticks = false,
xlabel={cost $k$},
ylabel={p(k)},
y tick label style={
                /pgf/number format/fixed,
                /pgf/number format/fixed zerofill,
                /pgf/number format/precision=3},
]

\addplot [green] table {tsp_approx_p80_400000_1.dat};
\end{axis}
\end{tikzpicture}
\hskip 10pt
\pgfplotsset{
    every axis legend/.append style={
    at={(0,1)},
    anchor=north west,
                                    },
            }
\begin{tikzpicture}
\begin{axis}[
width=5.5cm,
title= {Decreasing $r(987,\delta)$},
scaled ticks = false,
xlabel={value of $\delta$},
ylabel={value of $r(968,\delta)$},
y tick label style={
                /pgf/number format/fixed,
                /pgf/number format/fixed zerofill,
                /pgf/number format/precision=2},
]
\addplot [blue] table {tsp_approx_r_kge80_25_.dat};
\end{axis}
\end{tikzpicture}
\caption{TSP 80 cost probabilities and NWeights}
\label{fig:whitetspct80}
\end{figure}
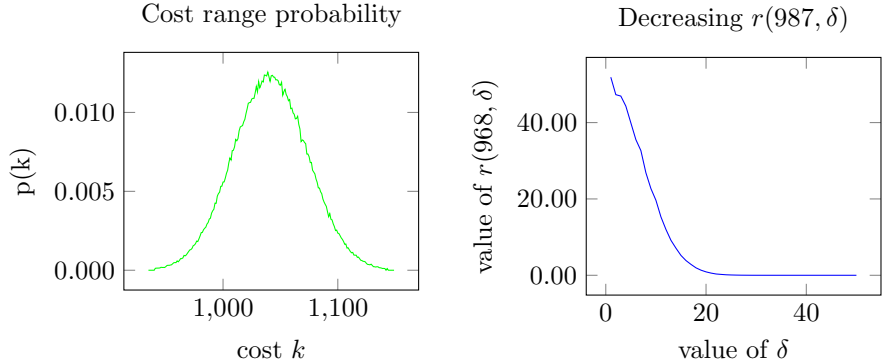

\section{Rate of improvement}
\subsection{Example search space and neighbourhood}
\label{sec:toyex}
Suppose every pair of neighbours has a cost difference within a bound $b$.
This is the L-Lipschitz condition on neighbouring points.
Suppose points with a cost difference less than $b$ all have the same neighbourhood weight:\\ 
$\delta \leq b \rightarrow r(k,\delta) = 1/ \sum_{i =k-b}^{k+b} p(i)$\\
and \\
$\delta > b \rightarrow r(k,\delta) = 0$.

Consider a search space with cost levels $K = 0..200$, where the cost probability is the same at every cost level.
Suppose the current best cost is $30$.
The probability that blind search selects a point with cost better than $30$ is $p^<(30) = 30/201 = 0.149$.
In table \ref{tab:neighsearch} we show the probabilities a neighbour of a point with cost $30$ has better cost $pn^<(30)$:
\begin{table}[!ht]
\scriptsize  
\centering
    \begin{tabular}{| c | c | c c c c c |}
    \toprule
 & $p^<(30)$ & \multicolumn{5}{c |}{$pn^<(30)$}\\
& & $b=1$ & $b=5$ &  $b=10$ & $b=50$ & $b=200$ \\ 
    \midrule
$k=30$ & 0.149 & 0.333 & 0.455 & 0.476 & 0.370 & 0.149\\
    \bottomrule
  \end{tabular}
  \vspace{1ex}
  \caption{ Comparing blind search with neighbourhood search}
  \label{tab:neighsearch}
\end{table}
For all values of $b<200$ neighbourhood search has a greater probability of improving than blind search.

\subsection{Expected Improvement from a single step}
\label{sec:betterblind}

To define the expected improvement from a single step, we consider the probability $pn(k,k')$ of picking a point of cost $k'$.  If $k'$ is lower than $k$, this yields an improvement of $k-k'$.
If, on the other hand, $k'$ is higher than $k$, the neighbour with cost $k'$ is ignored, and there is no ``negative improvement''. 
In short the improvement is $0$. 

The expected improvement from one step $en_{imp}(k)$ is therefore:
\begin{equation}
en_{imp}(k) = \sum_{k'<k} pn(k,k') \times (k-k')
\label{eq:enimp}
\end{equation}
Instead of searching in the neighbourhood, the system could use blind search to try to improve on $k$.  In this case the expected improvement $e_{imp}(k)$ is defined similarly:
\begin{equation}
e_{imp}(k) = \sum_{k'<k} p(k') \times (k-k')
\label{eq:eimp}
\end{equation}

Using the example search space above, we calculate the rate of improvement for $b=1,b=5,b=10,b=50,b=200$, starting at cost $30$.

Table \ref{tab:betterblind} shows the values  for $e_{imp}(30)$ in the first column and $en_{imp}(30)$ for different values of $b$ in the remaining columns:
\begin{table}[!ht]
\scriptsize  
\centering
    \begin{tabular}{| c | c | c c c c c |}
    \toprule
 & $e_{imp}(30)$ & \multicolumn{5}{c |}{$en_{imp}(30)$}\\
& & $b=1$ & $b=5$ &  $b=10$ & $b=50$ & $b=200$ \\ 
    \midrule
$k=30$ & 2.31 & 0.33 & 1.36 & 2.62 & 5.74 & 2.31\\
    \bottomrule
  \end{tabular}
  \vspace{1ex}
  \caption{ Expected one-step improvement from the current optimum  }
  \label{tab:betterblind}
\end{table}
The expected improvement from neighbourhood search only exceeds that from blind search when $b \geq 9$.

To explore an actual combinatorial problem, we investigated a travelling salesman instance small enough that we could generate all solutions.
We generated a single 10 city TSP, with inter-city edge lengths randomly generated in the range 1 to $25$, and we generated the values for $p(k)$. 
The optimum cost was $k_{opt} = 85$ the modal cost $k_{mod} = 119$.

Using 2-opt as the neighbourhood operator, we calculated the values of $e_{imp}(k)$ and $en_{imp}(k)$ for all values of the cost $k \in 85 \ldots 119$.
The results show that, under the assumption that the current cost is the best found so far, local descent has a faster expected rate of improvement than blind search almost up the the modal starting cost.
\begin{figure}[!ht]
\begin{center}
\begin{tikzpicture}
\pgfplotsset{
    every axis legend/.append style={
    at={(0,1)},
    anchor=north west,
                                    },
            }
\begin{axis}[
width=5.5cm,
title= Rate of improvement of blind search,
scaled ticks = false,
xlabel={value of $k$},
y tick label style={
                /pgf/number format/fixed,
                /pgf/number format/fixed zerofill,
                /pgf/number format/precision=2},
legend entries={$e_{imp}(k)$,$en_{imp}(k)$},
]
\addplot [green] table {tsp_eimp_low10251.dat};
\addplot [blue] table {tsp_enimp_low10251.dat};
\end{axis}
\end{tikzpicture}
\hskip 10pt
\pgfplotsset{
    every axis legend/.append style={
    at={(0,1)},
    anchor=north west,
                                    },
            }

\caption{TSP 10 instance, rates of improvement}
\label{fig:tsp10enimp}
\end{center}
\end{figure}
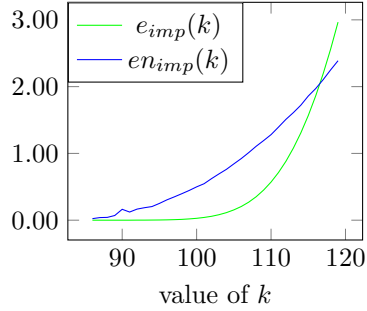

We then generated 100 TSP10 instances, with edge lengths randomly chosen in the range 1..25.
We computed $e_{imp}(k)$ and $en_{imp}(k)$ for values of $k$ from just above the optimum cost\footnote{In some TSP instances, all the points with cost just above the optimum have no improving neighbours, so the rate of improvement is $0$} to the modal cost on each instance.
We recorded the lowest cost $k$ for which $e_{imp}(k) > en_{imp}(k)$. From all 100 of these problem instances the overall lowest such cost was 29 above the optimum, confirming that the results shown in figure \ref{fig:tsp10enimp} are typical.

\section{Local blind search}
\subsection{The drawback of analysing a single step}
When the best cost so far $k$ is better than the cost $i$ of the current point, an improving point is one with cost better than $k$.
In this case the definition of improvement requires two parameters, the current cost and the best cost found so far.  For local descent the rate is
\begin{equation}
\begin{split}
e_{imp}(k,t) &= \sum_{i< t} p(i) \times (t-i)\\
en_{imp}(k,t) &= \sum_{i< t} pn(k,i) \times (t-i)
\end{split}
\label{eq:enimp2}
\end{equation}

On the toy example of section \ref{sec:toyex}, if the current cost is $30$ and the best cost so far is $15$ then the comparison between the rate of improvement of blind search and local descent is shown in table \ref{tab:impnotbest}.
As before we consider neighbourhoods where the biggest difference between neighbours $b$ is $1, 5, 10, 50$ or $100$.

\begin{table}[!ht]
\scriptsize  
\centering
    \begin{tabular}{| c | c | c c c c c |}
    \toprule
 & $e_{imp}(15)$ & \multicolumn{5}{c |}{$en_{imp}(30,15)$}\\
& & $b=1$ & $b=5$ &  $b=10$ & $b=50$ & $b=200$ \\ 
    \midrule
$k=30$ & 0.6 & 0.0 & 0.0 & 0.0 & 1.19 & 0.6\\
    \bottomrule
  \end{tabular}
  \vspace{1ex}
  \caption{ Expected one-step improvement from a point worse than the current optimum  }
  \label{tab:impnotbest}
\end{table}
Table \ref{tab:impnotbest} shows that improvement in a single step does not reflect the benefit of local descent over a sequence of steps.

Instead we investigate the number of steps required to reach a given cost, and explore the property of neighbourhoods sufficient to make local descent beneficial. 
We say local descent is beneficial if the number of steps to reach a given cost is lower with local descent than with blind search.

\subsection{Definitions and properties}
When analysing the expected number of steps for a local descent, we assume a fixed neighbourhood size $n$.
We assume the first improving neighbour found by selecting uniformly at random from the neighbourhood of the current point becomes the new current point.
If at any point there is no improving neighbour, the local descent ends and we assume the algorithm falls back on blind search.
To be unambiguous we term this \emph{local blind descent}.

Viewing local search as a combination of exploration and exploitation 
\cite{explore-exploit},
we note that local blind descent deploys exploration using blind search, then exploitation using local descent, and only returns to exploration as a final step when it reverts to blind search. 

\subsubsection{\emph{blind(t)}: The expected number of steps for blind search}
Blind search arbitrarily selects a point in the search space, returns its cost, and then tries again.
In this case the expected number of steps $blind$ for blind search to find a point with cost $t$ or better is:\\
\begin{equation*}
\begin{split}
     blind(t) &= \sum_{i=1}^{\infty} i \times ((1-p^<(t+1))^{i-1} \times p^<(t+1))\\
     &= 1/p^<(t+1)  
\end{split}
\end{equation*}

\subsubsection{\emph{imp(k,n)}: The expected number of steps for neighbourhood search to improve}

If the current point has cost $k$, we determine $imp(k,n)$ which is the expected number of steps to improve with a neighbourhood size of $n$.
$$ imp(k,n) = \sum_{j =1}^{n} : j \times (1 - pn^<(k))^{j-1} \times pn^<(k) $$
$imp(k,n)$ is the expected number of steps to improve, \emph{assuming there is an improving neighbour}.
However the probability $nop(k,n)$ that there is no improving neighbour is
$$nop(k,n) = (1 - pn^<(k))^n$$
Thus the probability there is an improving neighbour is $1-nop(k,n)$.

\subsubsection{Formalising local blind descent}
\label{sec:formalising}

We must take into account the possibility that local descent fails to reach the target cost.
Accordingly we analyse an extended local blind descent which behaves as follows.
First blind search is deployed until a point with a good enough cost to start local descent is reached.
For this we fix a ``starting cost'' $k$ and a point found by blind search is good enough if its cost is $k$ or better.
The expected number of blind search steps for this is $blind(k)$.
The point reached by this blind search has cost $j \leq k$ with probability $p(j)/p^<(k+1)$.
\begin{definition}[Local blind descent] 
The expected number of steps for extended local blind descent, with target cost $t$, neighbourhood size $n$ and starting local descent cost $k$, is $lbd(k,t,n)$.
\begin{equation*}
\begin{split}
        lbd(k,t,n) =& blind(k) + \\ 
        & \sum_{j=0}^k steps(j,k,t,n) \times p(j)/p^<(k+1)
        \end{split}
\end{equation*}
\label{def:lbd}
\end{definition}

$steps(j,k,t,n)$ models a search which starts at a point with cost $j$, and chooses neighbours of points with that cost until either none of the $n$ neighbours are improving, with probability $nop(j,n)$, in which case it resorts to blind search, with expected number of steps $lbd(k,t,n)$ or there is an improving neighbour found, with probability $1-nop(j,n)$ after $imp(j,n)$ steps.
In the latter case, 
weighted by the probability that the next point has cost $i$, $steps(i,k,t,n)$ calculates the remaining steps.
\begin{definition}[Steps]
The function $steps(j,k,t,n)$ encodes the expected number of steps, starting with at a point with cost $j$ to reach a point with a cost $t$ or better, assuming all points have a neighbourhood of size $n$,  by local blind descent.
\begin{equation*}
    \begin{split}
&steps(j,k,t,n) =\\
&\left.
  \begin{cases}
  0 & \text{if} \  j \leq t \\
  nop(j,n) \times (n + lbd(k,t,n)) \ \ + &\\
  (1-nop(j,n)) \times & \text{if} \ j > t\\
  \ \ (imp(j,n) + \sum_{0<i<j} \frac{pn(j,i)}{ pn^<(j) } \times steps(i,k,t,n)) 
    \end{cases}
    \right\}
\end{split}
\end{equation*}
\label{def:stepsdef}
\end{definition}

\begin{definition}[Beneficial local blind descent]
We say local blind descent with neighbourhood size $n$, 
starting local descent at cost $k$ or better
is beneficial if 
$lbd(k,t,n) \leq blind(t)$
\end{definition}

\subsection{Full NSC for local blind descent}
\label{sec:nsf2}
For beneficial local blind descent it is also necessary for $r(k,\delta)$ to increase as $k$ decreases.  We show this with a counterexample.

Suppose a local blind descent starts at a cost level $k$ where $r(k,k-t) < 1$,
so the probability a neighbour of $k$ has the target cost is lower than the probability blind search selects a point with the target cost.
If, also, $pn^<(k) \geq p^<(k)$, then from $NSC(k)$ we can infer that $r(k,1) > 1$
Now supposing $r(k,\delta)$ does not increase as $k$ decreases,
and $\forall t \leq k_1<k, \forall \delta : r(k_1,\delta)=1$.
Then after 
finding an improving neighbour,
local blind descent will have the same expected number of steps as blind search.
In this case, starting at cost level $k$, even if local blind descent reaches the target cost $t$ without restarting, blind search has a smaller expected number of steps.
Technically, in this case, local blind descent is not beneficial.

We therefore require neighbourhood weight, 
not to decrease with decreasing cost $k_1$:
$$\forall k \geq k_1 > k_2 \geq \delta : r(k_1,\delta) \leq r(k_2,\delta)$$

\begin{definition}
Full $NSC(k)$ holds if for all $k_1 \leq k$, the neighbourhood of $k_1$ is unbiased or positively biased, and the neighbourhood weight $r(k,\delta)$ increases as either $k$ or $\delta$ decreases: 
\begin{alignat*}{2}
&\forall \delta \leq k_1 \leq k : pn(k_1,k_1-\delta) \ge r(k_1,\delta) \times p(k_1-\delta) &\\
&\forall k \geq k_1 \geq \delta_1 \geq \delta_2  : r(k_1,\delta_1) \leq r(k_1,\delta_2) &\\
&\forall k \geq k_1 \geq  k_2 \geq \delta  : r(k_1,\delta) \leq r(k_2,\delta) &
\end{alignat*}
 \label{eq:fullnsf}
\end{definition}

We illustrate that full NSC typically holds from $k_{ge}$ 
with the randomly generated TSP.

\begin{figure}[hbt]
\begin{center}
\pgfplotsset{
    every axis legend/.append style={
    at={(1,1)},
    anchor=north east,
                                    },
            }

\begin{tikzpicture}
\begin{axis}[
width=5.5cm,
title= {TSP 100 - $r(k,\delta)$},
xlabel={Cost change: $\delta$},
ylabel={neighbourhood weight: $r(k,\delta)$},
legend entries={{k=2258},{k=2248},{k=2238},{k=2228},{k=2218}},
]
\addplot [blue, solid, line width = 2.5pt] table {mytspr10010225.dat};
\addplot [blue, solid, line width = 2.0pt]  table {mytspr10010224.dat};
\addplot [blue, solid, line width = 1.5pt ] table {mytspr10010223.dat};
\addplot [blue, solid, line width = 1.0pt ] table {mytspr10010222.dat};
\addplot [blue, solid, line width = 0.5pt ] table {mytspr10010221.dat};
\end{axis}
\end{tikzpicture}
\caption{neighbourhood weights}
\label{fig:rki}
\end{center}
\end{figure}
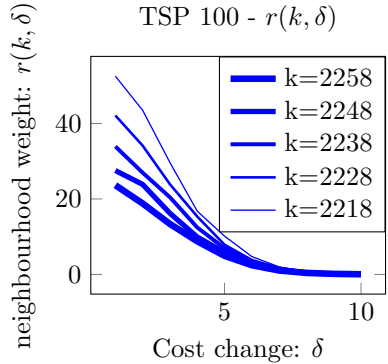

The graph in
figure \ref{fig:rki} shows values of $r(k,\delta)$ in a 100-location TSP. 
The x-axis shows increasing values of $\delta$, and different values of $k$ are shown as different lines.
Again higher values of $k$ yield lower values of $r(k,\delta)$ for all values of $\delta$.
In this graph there is a different curve for each value of $k$, showing how larger values of $k$ yield smaller values for $r(k,\delta)$.
The neighbourhood is 2-swap, and the data is based on all the neighbours of 20 points.
The horizontal axis shows how $r(k,\delta)$ also decreases with increasing $\delta$ (confirming the NSC property).
The graph shows $r(k,\delta) : \delta \in 1..10$, for 5 different values of $k$: 
$2258, 2248,2238,2228,2218$.

\subsection{Theorems on the benefit of local blind descent}
\label{sec:abstract-local-descent}
 
\subsubsection{Conditions guaranteeing that local blind descent is beneficial}
If local descent only starts at the target cost $t$, then local blind descent is the same as blind search.  Thus $lbd(t,t,n) = blind(t)$ and is, by definition, beneficial.

A more general condition  under which local blind descent is beneficial, 
requires
local descent to have a greater chance of selecting a
point with cost t or less than blind search:
\\$\sum_{i=0}^{t} r(k,k-i) \times p(i) \geq p^<(t+1)$.\\
In this case, by the definition of \emph{Full NSC(k)}, the same holds for all cost levels $k_1$ better than $k$: 
\begin{equation*}
    \begin{split}
\sum_{i=0}^t pn(k_1,i) &\geq \sum_{i=0}^t r(k_1,k_1-i) \times p(i) \\
&\geq \sum_{i=0}^t r(k,k-i) \times p(i) \\
&\geq p^<(t+1)    
    \end{split}
\end{equation*}
Consequently at every search step during local descent there is an equal or better chance of reaching the target cost than with blind search.
Naturally during the blind search steps, the same is true, so 
local blind descent is also beneficial under the above condition.

\subsubsection{Large neighbourhoods}

Assuming $k \leq k_{mod}$ and $r(k,1)>0$, then $pn^<(k)>0$.
Suppose our neighbourhoods have infinite size, and $pn^<(k)>0$,
then the probability $nop(k,\infty)$ that there are no improving neighbours is $0$.
In this case local blind descent is guaranteed to reach \emph{any} given cost level right down to the optimum.

More generally, if the neighbourhood size $N$ is large enough, until $pn^<(k)$ is very small, the probability there are no improving neighbours becomes small enough to be ignored in calculating the expected number of steps.
In this section we shall use $\infty$ to represent the size of any large enough neighbourhood.

Accordingly the expected number of steps to improve is:
$$ \sum_{j =0}^{\infty} : (1+j) \times (1 - pn^<(k))^j \times pn^<(k) = 1/pn^<(k)$$
Simplifying, yields the following equation for $imp(k) = imp(k,\infty)$, the expected number of steps to improve from cost $k$:
\begin{equation}
    imp(k,\infty) = imp(k) = 1 / pn^<(k)
    \label{imp-repl}
\end{equation}

Since every infinite neighbourhood of a non-optimal point 
has an improving neighbour 
we can simplify the specification of the expected number of local descent steps.
Since the local descent cannot fail, so blind search is never restarted, the number of steps required by local descent is independent of the starting cost.

Writing $steps(k,t) = steps(k,s,t,\infty)$, for any starting cost $s$, we have:
\begin{equation*}
    \begin{split}
&steps(k,t) = \\
&\left.
  \begin{cases}
  0 & \text{if} \  k < t \\
  imp(k) \times (1 + \sum_{i=0}^{k-1} : pn(k,i) \times steps(i,t) & \text{if} \ k \geq t
    \end{cases}
    \right\}
\end{split}
\end{equation*}
(since $imp(k) = 1/pn^<(k)$)

\begin{definition}
We define the average neighbourhood weight of $k$ down to cost $t$ as
$$\bar{r}(k,t) = (\sum_{i=1}^{k-t} : r(k,i)) / (k-t) $$
\end{definition}
Given a large enough neighbourhood size, local blind descent starting at a cost $k$ and improving to a cost $t$ or better, has a smaller expected number of steps than blind search under
the following conditions:
\begin{theorem}
Assuming:
\label{thm:localdescent}
\begin{equation*}
    \begin{split}
&\bar{r}(k,t) \geq p^<(t+1) / p(t)\\
&k \leq k_{mod}\\
&\mathit{Full \ \ NSC(k)}
    \end{split}
\end{equation*}
it follows that
\begin{flalign*}
\ \ \ steps(k,s,t,\infty) = steps(k,t) & \leq blind(t)&
\end{flalign*}
\end{theorem}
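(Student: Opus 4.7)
The infinite-neighbourhood hypothesis collapses the setup: $nop(c,\infty)=0$ at every non-optimal cost $c$, so local blind descent never falls back on blind search, the starting cost $s$ drops out, and we are left with the acyclic recursion
\[
steps(k,t)=\tfrac{1}{pn^<(k)}+\sum_{0<i<k}\tfrac{pn(k,i)}{pn^<(k)}\,steps(i,t),\qquad steps(c,t)=0\text{ for }c\le t,
\]
which models a Markov chain on costs whose transitions strictly decrease cost. My plan is strong induction on $k$; the base case $k\le t$ is immediate since $steps(k,t)=0\le blind(t)$.

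For the inductive step, I would substitute $steps(i,t)\le blind(t)$ for $t<i<k$ and $steps(i,t)=0$ for $i\le t$ into the recursion. After simplification the target inequality $steps(k,t)\le blind(t)=1/p^<(t+1)$ becomes equivalent to the \emph{per-level success condition}
\[
\sum_{i=0}^{t}pn(k,i)\ \ge\ p^<(t+1).
\]
Thus the whole theorem reduces to this single inequality, and the induction closes provided it holds at every cost level $c\le k$ the descent may visit; this is exactly what the monotonicity of $r$ in $k$ (the third clause of Full NSC) delivers, so verifying the inequality at the worst case $c=k$ is enough.

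To establish the inequality I would combine all three hypotheses. The condition $k\le k_{mod}$ makes $p$ non-decreasing on $[0,k]$, so $p(k-\delta)\ge p(t)$ whenever $\delta\le k-t$. Together with the unbiased clause $pn(k,k-\delta)\ge r(k,\delta)\,p(k-\delta)$, this gives
\[
pn^<(k)\ \ge\ \sum_{\delta=1}^{k-t}r(k,\delta)\,p(k-\delta)\ \ge\ p(t)\,(k-t)\,\bar r(k,t)\ \ge\ (k-t)\,p^<(t+1),
\]
which converts the $\bar r$-hypothesis into a clean lower bound on the per-step improvement probability and, through Full NSC's monotonicity in $k$, propagates to $pn^<(c)\ge(c-t)p^<(t+1)$ for every $c\in(t,k]$.

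The main obstacle will be bridging from this bound on $pn^<(k)$ to the per-level sum $\sum_{i\le t}pn(k,i)$ that the inductive step actually requires: the hypothesis averages $r(k,\delta)$ over the \emph{small}-$\delta$ range $[1,k-t]$, whereas $\sum_{i\le t}pn(k,i)\ge\sum_{i=0}^{t}r(k,k-i)p(i)$ involves $r(k,\delta)$ on the \emph{large}-$\delta$ range $[k-t,k]$, and NSC monotonicity alone controls these ranges in the wrong direction. Closing this gap is the crux of the proof, and I expect it to rely on combining both monotonicities of Full NSC with the fact that $p$ is non-decreasing on $[0,t]$, so that the $i=t$ summand in $\sum_{i=0}^t r(k,k-i)p(i)$ pairs the largest relevant $p$-mass, $p(t)$, with the largest relevant $r$-value, $r(k,k-t)$, at which point the hypothesis $\bar r(k,t)\,p(t)\ge p^<(t+1)$ can be invoked to recover the required inequality.
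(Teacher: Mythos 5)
Your reduction is internally correct as far as it goes: with infinite neighbourhoods the recursion is $steps(k,t)=\tfrac{1}{pn^<(k)}+\sum_{t<i<k}\tfrac{pn(k,i)}{pn^<(k)}steps(i,t)$, and if you plug in the crude inductive hypothesis $steps(i,t)\le blind(t)$ the inequality $steps(k,t)\le blind(t)$ does become equivalent to the per-level condition $\sum_{i=0}^{t}pn(c,i)\ge p^<(t+1)$ at every level $c\in(t,k]$. The genuine gap is that this condition is \emph{not} implied by the theorem's hypotheses, and the bridge you sketch cannot be built. The hypothesis $\bar r(k,t)\ge p^<(t+1)/p(t)$ constrains the average of $r(k,\delta)$ only over $\delta\in[1,k-t]$, i.e.\ the weights for landing at intermediate costs in $[t,k-1]$, whereas $\sum_{i\le t}pn(k,i)\ge\sum_{i=0}^{t}r(k,k-i)p(i)$ involves $r(k,\delta)$ for $\delta\ge k-t$; NSC monotonicity gives $r(k,k-t)\le r(k,\delta)$ for $\delta\le k-t$, hence $r(k,k-t)\le\bar r(k,t)$ --- exactly the wrong direction for your proposed pairing of $p(t)$ with $r(k,k-t)$. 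Concretely, all three hypotheses can hold with $r(k,\delta)=0$ for every $\delta\ge k-t$ (no neighbour of a cost-$k$ point reaches cost $\le t$ in one move), in which case your per-level condition fails outright at level $k$ even though the theorem is still true: the descent reaches $t$ gradually through intermediate levels, which is precisely the behaviour your inductive hypothesis $steps(i,t)\le blind(t)$ is too weak to exploit. In effect your argument proves the theorem only under the stronger hypothesis $\sum_{i=0}^{t}r(c,c-i)p(i)\ge p^<(t+1)$ for all $c\le t\ldots k$, which is the separate (and easier) sufficient condition the paper states in the subsection on conditions guaranteeing beneficial local blind descent, not the stated theorem.

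The paper closes the gap you correctly identified as the crux by a different route: it compares the true process with a modified one, $steps^u_{r,p}$, in which the neighbourhood is exactly unbiased and the cost probabilities above the target are flattened to $p(t)$, and then proves three lemmas: (i) by induction, $steps^u_{r,p}(k,t)\le(k-t)\times imp^u_{r,p}(k)$, using Full NSC to show the improvement probability does not deteriorate as the cost decreases (this needs an auxiliary monotonicity-of-steps lemma); (ii) $steps_{r,p}(k,t)\le steps^u_{r,p}(k,t)$; and (iii) $(k-t)\times imp^u_{r,p}(k)\le 1/(p(t)\bar r(k,t))\le 1/p^<(t+1)=blind(t)$, which is where the hypothesis $\bar r(k,t)\ge p^<(t+1)/p(t)$ enters --- essentially the chain $pn^<(k)\ge(k-t)\,p(t)\,\bar r(k,t)$ that you derived, but used to bound the \emph{total} expected work of a multi-level descent rather than to force a single jump below $t$. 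If you want to salvage an induction of your type, the inductive invariant must be level-dependent (something like $steps(c,t)\le(c-t)\,imp^u(c)$ or the paper's uniform-problem bound), not the flat bound $blind(t)$.
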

The proof is in the technical appendix.

\subsubsection{Benchmark example}
Suppose, on the other hand, the neighbourhoods are not so large.
To compute the expected number of steps required by local blind descent, we need an estimate of the cost probabilities $p(k):k \in K$ the neighbourhood size $n$ and the neighbourhood weights $r(k,\delta): k \in K, \delta \leq k$.

To illustrate the performance of local blind descent, we introduce a benchmark example, intended to be representative of combinatorial problems and Full $NSC$ neighbourhoods.

We discuss concrete results on this benchmark firstly because it can give insights into local search behaviour on typical combinatorial problems.  Secondly we show that from good estimates of the cost probability distribution of a problem, and its neighbourhood size and weights, we can have some insights into restarting local search.

\paragraph{Benchmark problem class - cost probabilities}
For a problem instance whose objective is the sum of many terms, the optimum is reached when all the terms take their minimum value: there is just one such point.
If each term could take values $0$ or $1$,
then there are ${M \choose x}$ ways that $x$ out of $M$ terms take the value $1$, giving an objective value of $x$.

Using this as a representative of a combinatorial problem, we set $ct(4k) = {50 \choose k}$
and explore the benefit of local blind descent on this problem class\footnote{If $i \in 1 \ldots 3$ then $ct(4k + i)$ lies (linearly) between ${50 \choose k}$ and ${50 \choose k} + 1$\\
Setting $ct(k)={200 \choose k}$ exceeds the precision of the computer, \\
because in this case $p(0)=6e^{-61}$ and \ $1-p(0)$ returns \ $1.0$}.
Thus we have a cost range of $k \in 0 \ldots 200$,
with an exponentially decreasing cost probability towards the optimum.

\paragraph{Benchmark - neighbourhood probabilities}
Our benchmark neighbourhoods satisfy the L-Lipschitz condition as in the example of section \ref{sec:toyex} above, where the maximum cost difference between neighbours is $b$.
If $b=3$, the neighbourhood of a cost of $30$ for example includes no neighbours better than $27$.  Consequently, given a poor starting cost and a small L-Lipschitz bound $b$, blind search is likely to find better solutions than neighbourhood search.

If the Lipschitz bound is set to $200$, and neighbourhood count is large enough (we used $50$), then the expected number of steps to reach the target cost of $1$ is the same (up to an error of $10^{-14}$) from all starting costs.

\subsubsection{Starting cost from which local descent is beneficial}
\label{sec:descexp}
 Setting the target cost $t=10$ and neighbour count $n=50$, we can calculate the  expected number of steps to reach the target cost, given the starting cost $k$ and the Lipschitz bound $b$.

For each odd value of $b \in 1 \ldots 17$ we calculated the fraction of blind search steps saved by using local blind descent with the best starting cost.  The value $1.0$ means that the number of steps used by local blind descent is \emph{very} small compared to blind search.

The results, for target cost $t=10$, are shown in table \ref{tab:ebdsearch}
 \begin{table}[!ht]
\scriptsize  
\centering
    \begin{tabular}{| c | c c c c c c c c c|}
    \toprule
 Lipschitz  & 1 & 3 & 5 & 7 & 9 & 11 & 13 & 15 & 17 \\ 
 bound\\
     \midrule
 Starting & 23 & 37 & 28 & 23 & 22 & 23 & 24 & 26 & 28 \\
 cost \\
    \midrule
 Savings & 1.00 & 1.00 & 1.00 & 0.96 & 0.79 & 0.55 & 0.35 & 0.23 & 0.15 \\
    \bottomrule
  \end{tabular}
  \vspace{1ex}
  \caption{ Maximum savings from local blind descent}
  \label{tab:ebdsearch}
\end{table}

\subsubsection{Restarting cost from which local descent is beneficial}
As a local search progresses, the target cost $t$ is reduced.  For a given value of $b=7$, the following table shows how the optimum starting cost $k$  and the savings changes as $t$ is reduced.

 \begin{table}[!ht]
\scriptsize  
\centering
    \begin{tabular}{| c | c c c c c c c c c|}
    \toprule
 Target  & 0 & 5 & 10 & 15 & 20 & 25 & 30 & 35 & 40 \\ 
 cost\\
    \midrule
 Starting &  8 & 16 & 23 & 32 & 42 & 55 & 66 & 69 & 71 \\
 cost\\
     \midrule
 Savings & 0.68 & 0.87 & 0.96 & 0.99 & 1.00 & 1.00 & 1.00 & 1.00 & 1.00 \\ 
    \bottomrule
  \end{tabular}
  \vspace{1ex}
  \caption{ Maximum savings as the target cost reduces}
  \label{tab:tebdsearch}
\end{table}

The results, for Lipschitz bound $b=7$ in table \ref{tab:tebdsearch} show that when starting with a target cost further from the optimum, local search is most beneficial from an earlier starting cost.  However as the target approaches the optimum, local search needs a much better starting cost to be maximally beneficial.

These results give evidence supporting local descent restarting methods that restart from a reasonably good solution found previously.

\section{Conclusion}
\label{sec:conclusion}
The ``neighbourhood similar cost'' ($\mathrm{NSC(k)}$) property, made precise in definition \ref{sec:nsf}, is not only intuitive, but also sufficient to support the proofs that neighbourhood search has a better chance of improving from the current level of cost than blind search.
$\mathrm{NSC(k)}$ is a property of the cost level $k$.
The concept of neighbourhood weight $r(k,\delta)$ is introduced, which is the increased probability neighbours of a point with cost $k$ have similar cost (differing by $\delta$).
$\mathrm{NSC(k)}$ only holds if $r(k,\delta)$ decreases with increasing $\delta$.

Neighbourhood search is proven to be beneficial, assuming NSC($k$) holds, at any cost $k$ around which the decrease in cost probability $p(i)$ towards the optimum is monotonic. 

Moreover weaker conditions, implied by $NSC(k)$, are sufficient to imply these results.
The decreases in probability of $r(k,\delta)$ and of $p(i)$ need not be strictly monotonic for the proofs to go through.

A class of MAX-2-SAT problems is presented in which the NSC properties are shown to hold.
$100$ small Travelling Salesmen Problems have been randomly generated to show that the conditions sufficient to support beneficial local search hold in all of them.
Sampling solutions and neighbourhoods from larger TSPs suggests they also meet the conditions for our proofs that neighbourhood search is beneficial.

When the current cost is the best one yet found, we introduce a property of neighbourhoods sufficient to guarantee that local descent is beneficial, in the sense that its expected improvement is greater than that of blind search.

When the best cost yet found, $t$, is better than the current cost, $k$, then rather than the rate of improvement, a better measure of progress is the expected number of steps to find a point with cost better than $t$.
We formalise "local blind descent" which reverts to blind search if it reaches a locally optimal point worse than the target cost $t$.

A property of neighbourhood weight is presented which ensures local blind descent is beneficial under this measure of progress.
A more general property of neighbourhood weight is also shown to ensure local search is beneficial if the neighbourhood size is large enough (the proof, given in the appendix, uses infinite neighbourhoods).

Finally, the paper employs a benchmark problem class to investigate local descent with blind restarts. Blind search is used until a ``starting'' cost is reached, at which point the search uses local descent to reach a target cost.
The nearer the target cost is to the optimum, the lower the starting cost that must be reached before local descent is beneficial.
This reveals the drawback of blind restarts, and gives some supporting evidence for restarting from a good solution found previously.

In future research we will explore restarting search at previous solutions, and the probability, in our model, of avoiding local optima that have been reached before.

\bibliographystyle{apalike}
\bibliography{aij}
\begin{appendix}
\section{Proof of Lemmas}
\label{sec:improve-proof}

\noindent {\bf Lemma 1}.\\ 
Recall
$pbr^<(k) = \sum_{\delta=1}^k p(k-\delta) \times r(k,\delta)$ and $p^<(k) = \sum_{\delta=1}^k p(k-\delta)$\\
Assuming:
\begin{alignat*}{3}
&2 \times k &&\leq  \ k_{mod} &(GE)\\
&\forall \delta_1 < \delta_2 : r(k,\delta1) &&\geq \  r(k,\delta2) &\ \ \ \ (NSC) 
\end{alignat*}
it follows that
$$pbr^<(k) \geq \bar{r}(k) \times p^<(k)$$

\begin{proof}
For brevity, we write $p_\delta$ for $p(k-\delta)$, $r_\delta$ for $r(k,\delta)$ and $\bar{r}$ for $\bar{r}(k)$.\\
Since $r_\delta$ is decreasing with increasing $\delta$, we set $x$ to be the largest index for which $r_x \geq \bar{r}$.  Thus $\delta \leq x \rightarrow r_{\delta} \geq \bar{r}$.\\
Since $k-\delta \leq k$ and $2 \times k \leq k_{mod}$, 
$p_\delta$ is also decreasing with increasing $\delta$ so\\
$\delta \leq x \rightarrow (p_\delta - p_x) \times (r_\delta - \bar{r}) \geq 0$ because $p_\delta-p_x$ and $r_\delta - \bar{r}$ are both positive\\
$\delta > x \rightarrow (p_\delta - p_x) \times (r_\delta - \bar{r}) \geq 0$ because $p_\delta-p_x$ and $r_\delta - \bar{r}$ are both negative\\
\vspace{0.5ex}\\
Therefore:
\vspace{-0.5ex}
\begin{equation*}
\begin{split}
   pbr^<(k) = & \sum_{\delta=1}^k p_\delta \times r_\delta \\
  = & \sum_{\delta=1}^x (p_\delta - p_x) \times (r_\delta - \bar{r}) + \sum_{\delta=x+1}^k (p_\delta - p_x) \times (r_\delta - \bar{r})\\
    & + \sum_{\delta=1}^k p_\delta \times \bar{r} + \sum_{\delta=1}^k p_x \times r_\delta - \sum_{\delta=1}^k p_x \times \bar{r} \\
  \geq & \sum_{\delta=1}^k p_\delta \times \bar{r} + p_x \times ( \sum_{\delta=1}^k r_{\delta} - k \times \bar{r}) \\
  = & \sum_{\delta=1}^k p_\delta \times \bar{r} = \bar{r}(k) \times p^<(k)
    \end{split}
\end{equation*}
\end{proof}

\vspace{-1.5ex}

\noindent {\bf Lemma 2}.\\ 
Assuming
\begin{alignat*}{3}
&2 \times k &&\leq  k_{mod} &(GE)\\
&\forall \delta_1 < \delta_2 : r(k,\delta1) &&\geq r(k,\delta2) &\ \ \ \ (NSC) 
\end{alignat*}
it follows that
$$ \sum_{\delta=1}^k p(k+\delta) \times r(k,\delta)  \leq \bar{r}(k) \times \sum_{\delta=1}^k p(k+\delta) $$

\begin{proof}
For brevity, we write $p_\delta$ for $p(k+\delta)$, $r_\delta$ for $r(k,\delta)$ and $\bar{r}$ for $\bar{r}(k)$.\\
Since $\forall \delta : k+\delta \leq 2 \times k \leq k_{mod}$, then $p_\delta$ is increasing with increasing $\delta$ so\\
$\delta \leq x \rightarrow (p_\delta - p_x) \times (r_\delta - \bar{r}) \leq 0$ because only $p_\delta-p_x$ is negative\\
$\delta > x \rightarrow (p_\delta - p_x) \times (r_\delta - \bar{r}) \leq 0$ because only $r_\delta - \bar{r}$ is negative\\
\vspace{0.5ex}\\
Therefore:
\begin{equation*}
\begin{split}
    & \sum_{\delta=1}^k p_\delta \times r_\delta \\
  = & \sum_{\delta=1}^x (p_\delta - p_x) \times (r_\delta - \bar{r}) + \sum_{\delta=x+1}^k (p_\delta - p_x) \times (r_\delta - \bar{r})\\
    & + \sum_{\delta=1}^k p_\delta \times \bar{r} + \sum_{\delta=1}^k p_x \times r_\delta - \sum_{\delta=1}^k p_x \times \bar{r} \\
  \leq & \sum_{\delta=1}^k p_\delta \times \bar{r} + p_x \times ( \sum_{\delta=1}^k r_{\delta} - k \times \bar{r}) \\
  = & \sum_{\delta=1}^k p_\delta \times \bar{r}
    \end{split}
\end{equation*}

\end{proof}

\pagebreak

{\bf Lemma 3}.\\ 
Recall $pn^>(k) = \sum_{\delta=1}^k pn(k, k+\delta) {\rm \ \& \ } pbr^>(k) = \sum_{\delta=1}^k p(k+\delta) \times r(k,\delta)$\\
If the neighbourhood of $k$ is unbiased, then:
$$pn^>(k) \leq pbr^>(k)$$

\begin{proof}
Since $r(k,\delta) = (pn(k,k+\delta)+pn(k,k-\delta)) / (p(k+\delta)+p(k-\delta))$\\
then\\
$pn(k,k+\delta)+pn(k,k-\delta) = r(k,\delta) \times p(k+\delta) + r(k,\delta) \times p(k-\delta)$.\\
Therefore \\
$pn(k,k+\delta) - r(k,\delta) \times p(k+\delta) = - (pn(k,k-\delta) - r(k,\delta) \times p(k-\delta))$.\\
Consequently\\
$pn^>(k) - pbr^>(k) = -( pn^<(k) -pbr^<(k)$\\
Since the neighbourhood is unbiased:\\ 
$pn^<(k) \geq pbr^<(k)$\\
and we conclude that 
$pn^>(k) \leq pbr^>(k)$.
\end{proof}

\section{Proof of the benefit of local blind descent with infinite neighbourhoods}
\label{sec:localdescentproof}
\noindent Recall the definition of the average neighbourhood weight of $k$ down to cost $t$:\\
$$\bar{r}(k,t) = (\sum_{i=1}^{k-t} : r(k,i)) / (k-t) $$
For any given cost threshold $t$, starting cost $k \in t \ldots k_{mod}$, where $\mathit{Full \ NSC(k)}$ and $\bar{r}(k,t) \geq p^<(t+1)/p(t)$,
we prove that
$$steps(k,t) \leq blind(t)$$ 

The proof is structured using a sequence of lemmas.
Firstly we introduce the syntax $$steps_{r,p}(k,t)$$ for $steps(k,t)$ in a problem class where
$r$ denotes the neighbourhood weights $r(k',\delta)$ for $k' \in t+1 \ldots k$, $\delta \leq k'$, and $p$ denotes the cost probabilities $p(i) : i \in 0 \ldots k$.
Thus $steps(k,t) = steps_{r,p}(k,t)$ for some $r,p$.

The syntax
$$steps^u_{r,p}(k,t)$$ denotes the variation where the neighbourhood is unbiased ($\forall k_2 < k_1 \leq k : pn(k_1,k_2) =p(k_2) \times r(k_1,(k_1-k_2))$) and the cost probabilities above the target cost are uniform.
If $p$ denotes $p(i) : i \in 0 \ldots k$, we write
\[
p^u(i) = \left.
  \begin{cases}
  p(i) & \text{if} \  i \leq t \\
  p(t) & \text{if} \ k \geq i > t
    \end{cases}
    \right\}
\]

We write $imp^u_{r,p}(k)$ for the probability of improving from level $k$ under these assumptions
\begin{definition}
Probability of improving with costs $p^u$:\\ 
    $imp^u_{r,p}(k) = 1 / \sum_{i=1}^{k} r(k,k-i) \times p^u(i) $
\end{definition}
Thus $steps^u_{r,p}(k,t)$ 
\[
=  
\left.
  \begin{cases}
  0 & \text{if} \  k \leq t \\
  imp^u_{r,p}(k) \times (1 + \sum_{i=0}^{k-1} : p^u(i) \times r(k,(k-i)) \times steps^u_{r,p}(i,t) & \text{if} \ k > t
    \end{cases}
    \right\}
\]

Lemma \ref{lemma:main} {\bf Fixed count steps} is:\\
Assuming $\mathit{ Full \ NSC(k)}$
    $$steps^u{r,p}(k,t) \leq imp^u_{r,p}(k) \times (k-t)$$

The second lemma establishes that the original count of steps for neighbourhoods weights $r$ and cost probability $p(t)$ (assuming $\mathit{Full \ NSC}$) is smaller than $steps^u_{r,p}$.

Lemma \ref{lemma:reducedsteps} {\bf Reduced steps} is:\\
Assuming $\mathit{ Full \ NSC(k)}$ and $k \in t \dots k_{mod}$
    $$\forall i \in t \ldots k :steps_{r,p}(i,t) \leq steps^u_{r,p}(i,t)$$    

The third lemma establishes that if $\bar{r}(k,t) \geq p^<(t+1)/p(t)$ then the expected number of steps using blind search is greater than $imp^u_{r,p}(k) \times (k-t)$

Lemma \ref{lemma:gandt} {\bf Steps upper bound} is:\\
Assuming $\mathit{ Full \ NSC(k)}$, $k \in t \ldots k_{mod}$ and $\bar{r}(k,t) \geq p^<(t+1)/p(t)$
$$imp^u_{r,p}(k) \times (k-t) \leq blind(t)$$

The final theorem {\bf Beneficial local blind descent} follows from lemma \ref{lemma:main}, lemma \ref{lemma:reducedsteps}  and lemma \ref{lemma:gandt}.

Assuming:\\
$\bar{r}(k,t) \geq p^<(t+1) / p(t)$\\
$k \in t \ldots k_{mod}$\\
$\mathit{Full \ \ NSC(k)}$\\
it follows that
\begin{flalign*}
\ \ \ steps(k,t) = steps_{r,p}(k,t) & \leq blind(t)&
\end{flalign*}

\subsection{Proof of lemma \ref{lemma:main} {\bf Fixed count steps}}
\begin{lemma}[Fixed Count Steps]
Assuming 
\begin{flalign*}
\forall \delta_1 &\leq k_1, \delta_1 \leq \delta_2, \delta_2 \leq k_2, k_2 \leq k :  r(k_1,\delta_1) \geq r(k_2,\delta_2) \ \ \rm{(FullNSC(k))}&
\end{flalign*}
it follows that

$$ steps^u_{r,p}(k,t) \leq imp^u_{r,p}(k) \times (k-t) $$
\label{lemma:main}
\end{lemma}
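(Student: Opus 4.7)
The plan is to proceed by induction on $L = k - t$. The base case $k \leq t$ is immediate since $steps^u_{r,p}(k,t) = 0$ and the right-hand side is non-negative. For the inductive step, I would unfold
\[
steps^u_{r,p}(k,t) = imp^u_{r,p}(k)\Bigl(1 + \sum_{i=0}^{k-1} p^u(i)\, r(k,k-i)\, steps^u_{r,p}(i,t)\Bigr),
\]
drop the terms with $i \leq t$ (for which $steps^u(i,t)=0$), apply the inductive hypothesis $steps^u(i,t) \leq imp^u(i)(i-t)$ to the remaining terms, and substitute $p^u(i) = p(t)$ for $i > t$. After cancelling $imp^u(k)$ the lemma reduces to the key inequality
\[
p(t)\sum_{i=t+1}^{k-1} r(k, k-i)\, imp^u(i)\, (i-t) \;\leq\; k-t-1.
\]

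A natural way to read this is as a Lyapunov/supermartingale bound: letting $V(C) = imp^u(C)(C-t)$ for $C > t$ and $V(C) = 0$ otherwise, the claim is exactly that $V$ drops by at least $imp^u(C)$ per improvement step in expectation. Telescoping $V$ along the strictly decreasing sequence of improvement costs visited by the descent then gives $steps^u_{r,p}(k,t) \leq V(k) = imp^u(k)(k-t)$, so it suffices to establish the key inequality above.

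To bound $imp^u(i)$ for $t < i < k$, both clauses of Full NSC are used: the $k$-monotonicity gives $r(i,\delta) \geq r(k,\delta)$ for every $\delta$, and the $\delta$-monotonicity gives $r(k,\cdot)$ non-increasing. Splitting $1/imp^u(i) = \sum_{\delta=1}^{i} r(i,\delta)\, p^u(i-\delta)$ at the target cost yields a lower bound of the form $p(t)\sum_{\delta=1}^{i-t} r(k,\delta) + A$, where $A := \sum_{j \leq t-1} r(k,k-j)\,p(j)$ captures the mass of transitions directly into the sub-target region. Substituting the resulting upper bounds on the $imp^u(i)$ into the key inequality and reindexing by $m = i - t$ converts it into a purely combinatorial inequality in the NWeights $R_\delta := r(k,\delta)$ and the constant $A$, namely
\[
\sum_{m=1}^{L-1} \frac{m\, R_{L-m}}{R_1 + \cdots + R_m + A/p(t)} \;\leq\; L - 1.
\]

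The main obstacle is the tightness of this combinatorial inequality. A quick check at $L = 3$ shows that dropping the $A$ contribution already breaks the bound, so the proof cannot afford to discard the target-region mass anywhere in the reduction. I expect the argument to close by an Abel summation or rearrangement on the ratio sum, using the monotonicity $R_1 \geq R_2 \geq \cdots$ from the $\delta$-clause of Full NSC to telescope the numerators into the denominators; the role of $A$ is to prevent the denominators for small $m$ from becoming too close to zero, which is precisely what keeps the telescoping bounded by $L-1$.
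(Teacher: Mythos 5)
Your reduction coincides with the paper's proof up to the crux: induct on $k-t$, unfold the recursion, discard the $i\le t$ terms, apply the inductive hypothesis, and cancel $imp^u_{r,p}(k)$, leaving $p(t)\sum_{i=t+1}^{k-1} r(k,k-i)\,(i-t)\,imp^u_{r,p}(i)\le k-t-1$. But at exactly this point the proposal stops being a proof: the ``purely combinatorial inequality'' $\sum_{m=1}^{L-1} m R_{L-m}/(R_1+\cdots+R_m+A/p(t))\le L-1$ is the entire remaining content of the lemma, and you only say you \emph{expect} it to close by Abel summation or rearrangement. That is a genuine gap, not a routine detail.

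Moreover, the heuristic you offer for why this step is delicate is mistaken and steers you away from the simple closing argument. The $A=0$ version of your inequality is true for every non-increasing sequence $R_1\ge R_2\ge\cdots\ge 0$: the $m$-th term equals $R_{L-m}$ divided by the average of $R_1,\dots,R_m$, and since prefix averages of a non-increasing sequence do not increase with the prefix length, each term is at most $R_{L-m}$ divided by the average of $R_1,\dots,R_{L-1}$, so the sum is at most $\bigl(\sum_{\delta=1}^{L-1}R_\delta\bigr)$ divided by that average, which is exactly $L-1$. (Your $L=3$ counter-check fails: with $x=R_2/R_1\in[0,1]$ the sum is $x+2/(1+x)\le 2$.) This prefix-average step is precisely the paper's argument, phrased there as $(i-t)\,imp^u_{r,p}(i)\le 1/(p(t)\,\bar r(i,t))\le 1/(p(t)\,\bar r(k,t+1))$, both inequalities using Full NSC, after which $\sum_{i=t+1}^{k-1} r(k,k-i)=(k-t-1)\,\bar r(k,t+1)$ finishes the induction. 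So retaining the sub-target mass $A$ is harmless but unnecessary, and no rearrangement machinery is needed; what is needed, and missing from your proposal, is this monotonicity-of-prefix-averages argument (or any actual proof of your displayed inequality).
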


\begin{proof} 
The proof is by induction, using the definition of $steps_u(k,t)$ to make the inductive step.
\[
steps^u_{r,p}(k,t) = \left.
  \begin{cases}
  0 & \text{if} \  k \leq t \\
  imp^u_{r,p}(k) \times (1 + \sum_{i=0}^{k-1} : p^u(i) \times r(k,(k-i)) \times steps^u_{r,p}(i,t) & \text{if} \ k > t
    \end{cases}
    \right\}
\]
Firstly, we establish the base case $k=t+1$.
In this case lemma (\ref{lemma:main}) holds because 
\begin{equation*}
    \begin{split}
steps_u(t+1,t) &= imp^u_{r,p}(t+1) \times (1+0) = ((t+1)-t) \times imp^u_{r,p}(t+1) \\ 
    \end{split}
\end{equation*}

Proof of inductive step.
Firstly note that
\begin{equation}
    \begin{split}
(i-t) \times imp^u_{r,p}(i) &= (i-t) / (\sum_{j=1}^{i} r(i,i-j) \times p^u(j)) \\
& \leq (i-t) / (\sum_{j=1}^{i-t} r(i,i-j) \times p^u(j)) \\
&= (i-t) / ((i-t) \times \bar{r}(i,t) \times p(0)) \\
&= 1/(\bar{r}(i,t) \times p(0))
    \end{split}
    \label{eq:impbarr}
\end{equation}
Note also that since $r(i,j)$ is decreasing with $i$ by FullNSC(k):
\begin{equation*}
\begin{split}
    \bar{r}(k,t+1) = & \sum_{j=1}^{k-(t+1)} r(k,j)/(k-(t+1))\\
    \leq & \sum_{j=1}^{k-(t+1)} r(k-1,j)/(k-(t+1))\\
    = & \sum_{j=1}^{(k-1) - t} r(k-1,j)/((k-1)-t)\\
    = & \bar{r}(k-1,t)\\
\end{split}
\end{equation*} 
Also $r(i,j)$ is decreasing with $j$ by FullNSC(k), 
for all $t \leq j < i-1 < k$:
\begin{equation*}
\begin{split}
      \bar{r}(i-1,j) \geq & \ \bar{r}(i-1,j-1)\\
      = & \sum_{\delta=1}^{(i-1) - (j-1)} r(i-1,\delta)/((i-1)-(j-1))\\
      = & \sum_{\delta=1}^{i-j} r(i-1,\delta)/(i-j)\\
      \geq = & \sum_{\delta=1}^{i-j} r(i,\delta)/(i-j)\\
      = & \bar{r}(i,j)
\end{split}
\end{equation*} 

Since $\geq$ is transitive we conclude:
$$\forall i\leq k, j < i : \bar{r}(i,j) \geq \bar{r}(k,j)$$
and consequently:
$$\forall i\leq k : \bar{r}(k,t+1) \leq \bar{r}(i,t)$$
therefore
\begin{equation}
\forall i \in t \ldots k : (i-t) \times imp^u_{r,p}(i) \leq 1/ (\bar{r}(i,t) \times p(0)) \leq 1 / (\bar{r}(k,t+1)\times p(0))
    \label{eq:bari}    
\end{equation}
For induction we assume lemma (\ref{lemma:main}) holds for all $steps_u(i,t): i < k$.
To prove it holds for $k$, we substitute 
$(i-t) \times imp^u_{r,p}(i)$ for $steps_u(i,t)$ in the definition of $steps_u(k,t)$. Since $steps_u(i,t)$ occurs positively in this definition, our inductive assumption ensures this yields an expression greater than $steps_u(k,t)$.
\begin{equation*}
\begin{split}
steps_u(k,t) & = imp^u_{r,p}(k)\times ( 1 + \sum_{i=0}^{k-1} (p(0) \times r(k,k-i) \times steps_u(i,t) ))\\
& =
imp^u_{r,p}(k)\times ( 1 + \sum_{i=t+1}^{k-1} (p(0) \times r(k,k-i) \times steps_u(i,t) ))\\
& \leq imp^u_{r,p}(k) \times ( 1 + \sum_{i=t+1}^{k-1} p(0) \times r(k,k-i)  \times (i-t) \times imp^u_{r,p}(i) )\\
& \leq imp^u_{r,p}(k) \times ( 1 + \sum_{i=t+1}^{k-1} \frac{p(0) \times r(k,k-i) }{ p(0) \times \bar{r}(k,t+1) }) \rm{\ \ by \ \ \ equation \ref{eq:bari}}\\
& = imp^u_{r,p}(k) \times ( 1 + \frac{p(0) \times (k-(t+1)) \times \bar{r}(k,t+1) }{  p(0) \times \bar{r}(k,t+1) }) \\
& = imp^u_{r,p}(k) \times ( 1 + k - (t+1)) \\
& \leq (k-t) \times imp^u_{r,p}(k)
\end{split}
\end{equation*}
\end{proof}

\subsection{Proof of lemma \ref{lemma:reducedsteps} {\bf Reduced Steps}}
\vspace{0.5cm}
\begin{lemma}[Reduced Steps]
\label{lemma:reducedsteps}
Assuming $\mathit{Full \ NSC(k)}$ 
$$steps_{r,p}(k,t) \leq steps^u_{r,p}(k,t)$$
\end{lemma}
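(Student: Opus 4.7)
The plan is to proceed by induction on $k$, with the base case $k \le t$ trivial (both sides equal zero). For $k > t$, I assume as inductive hypothesis that $steps_{r,p}(i,t) \le steps^u_{r,p}(i,t)$ for every $t < i < k$, and rewrite both recursions in cross-multiplied form (the contributions of $i \le t$ vanish since $steps(i,t)=0=steps^u(i,t)$ there):
\begin{align*}
pn^<(k)\cdot steps_{r,p}(k,t) &= 1 + \sum_{t<i<k} pn(k,i)\, steps_{r,p}(i,t),\\
pn^{<,u}(k)\cdot steps^u_{r,p}(k,t) &= 1 + \sum_{t<i<k} r(k,k-i)\,p^u(i)\, steps^u_{r,p}(i,t),
\end{align*}
where $pn^{<,u}(k) := \sum_{i<k} r(k,k-i)\,p^u(i)$.

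First I would establish the pointwise bound $pn(k,i) \ge r(k,k-i)\,p^u(i)$ for every $i<k$. For $i \le t$ this is the unbiased condition in Full NSC$(k)$ verbatim; for $t < i < k$ it combines $pn(k,i) \ge r(k,k-i)\,p(i)$ with the monotonicity $p(i) \ge p(t) = p^u(i)$ of $p$ below the modal cost. In particular $pn^<(k) \ge pn^{<,u}(k)$.

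Setting $\epsilon_i := pn(k,i) - r(k,k-i)\,p^u(i) \ge 0$, substituting the inductive hypothesis into the first recursion and cross-multiplying against the second reduces $steps_{r,p}(k,t) \le steps^u_{r,p}(k,t)$ to
\begin{equation*}
\sum_{t<i<k} \epsilon_i\, steps^u_{r,p}(i,t) \;\le\; \Big(\sum_{i<k}\epsilon_i\Big)\,steps^u_{r,p}(k,t),
\end{equation*}
the factor $1 + \sum r(k,k-i)\,p^u(i)\,steps^u_{r,p}(i,t)$ being absorbed as $pn^{<,u}(k)\cdot steps^u_{r,p}(k,t)$. This is a convex-combination bound, with non-negative weights extended by zeros at $i \le t$, and it holds as soon as every $steps^u_{r,p}(i,t)$ with $t<i<k$ is bounded by $steps^u_{r,p}(k,t)$.

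The crux, and the main obstacle, is therefore an auxiliary monotonicity lemma: $steps^u_{r,p}(\cdot,t)$ is non-decreasing on $[t,k]$. I would prove this by a separate induction, comparing $steps^u_{r,p}(k,t)$ with $steps^u_{r,p}(k-1,t)$ directly through their recursive definitions. The uniform-unbiased structure is clean, since transition weights above $t$ are all proportional to $r(k,\cdot)$ without a varying $p(i)$, and the Full NSC monotonicity $r(k_1,\delta) \le r(k_2,\delta)$ for $k_1 \ge k_2$ controls how the transition weights evolve with $k$. The delicate point is the enlargement of the support of the landing distribution when moving from $k-1$ to $k$ (a new landing state $k-1$ appears); this is precisely why the comparison is carried out in the clean uniform-unbiased problem rather than in the original one. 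Once the monotonicity is in hand, the convex-combination bound above closes the main induction.
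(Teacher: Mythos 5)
Your plan is correct and is essentially the paper's own argument: induction on $k$, the pointwise domination $pn(k,i) \ge r(k,k-i)\,p^u(i)$ (unbiasedness plus $p(i)\ge p(t)$ below the modal cost), and the auxiliary monotonicity of $steps^u_{r,p}(\cdot,t)$, which the paper likewise isolates as a separate lemma proved by its own induction using Full NSC. The only cosmetic difference is that you close the step with a direct convex-combination inequality, whereas the paper argues by contradiction via the identity that the weighted sums of differences $\sum_i P_i\,(f(k,t)-f(i,t))$ equal $1$ for both recursions.
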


Note that if for any $f, k, P, P_i : i \in 0..k-1$:\\
$$f(k,t) = (1/P) \times ( 1 + \sum_{i=0}^{k-1} P_i \times f(i,t))$$
where $\sum_{i=0}^{k-1} P_i =P$,
then 
\begin{equation*}
    \begin{split}
& P \times f(k,t) =  1 + \sum_{i=0}^{k-1} P_i \times f(i,t)\\
\therefore & \sum_{i=0}^{k-1} P_i \times f(k,t) -  \sum_{i=0}^{k-1} P_i \times f(i,t) = 1 \\
\therefore & \sum_{i=0}^{k-1} P_i \times (f(k,t)-f(i,t)) = 1     
    \end{split}
\end{equation*}
This holds for both $steps_{r,p}$ and for $steps^u_{r,p}$.

Consequently\\
$\sum_{i=0}^{k-1} r(k,k-i) \times p(0) \times (steps^u_{r,p}(k,t)-steps^u_{r,p}(i,t)) = 1$\\
and\\
$\sum_{i=0}^{k-1} pn(k,i) \times (steps_{r,p}(k,t)-steps_{r,p}(i,t)) = 1$\\

\begin{proof}
The proof is by induction on $k$.
For the base case, $k=t+1$, and
\begin{equation*}
    \begin{split}
steps_{r,p}(t+1,t) &= imp(p,t+1) = imp^u_{r,p}(p,t+1) = steps^u_{r,p}(t+1,t)\\
    \end{split}
\end{equation*}

For the inductive step, we assume \\
\begin{equation}
\forall i \leq k : steps_{r,p}(i,t) \leq steps^u_{r,p}(i,t)
\label{eq:stepseqone}
\end{equation}
Since $pn(k,i) \geq r(k,k-i) \times p$, and (by \ref{eq:stepseqone})\\
$steps_{r,p}(i,t) \leq steps^u_{r,p}(i,t)$ for each $i<k$,\\
and $steps^u_{r,p}(k,t) \geq steps^u_{r,p}(i,t)$ for each $i<k$\\
(which seems obvious, but requires the proof of lemma \ref{lemma:monotonesteps} below)\\
therefore
$steps_{r,p}(k,t) > steps^u_{r,p}(k,t)$
would imply\\
\begin{equation*}
    \begin{split}
&\sum_{i=0}^{k-1} pn(k,i) \times (steps_{r,p}(k,t)-steps_{r,p}(i,t)) \\
> &\sum_{i=0}^{k-1} r(k,k-i) \times p(0) \times (steps^u_{r,p}(k,t)-steps^u_{r,p}(i,t))        
    \end{split}
\end{equation*}
which is false because both are equal to $1$.\\
We conclude that
$$steps_{r,p}(k,t) \leq steps^u_{r,p}(k,t)$$
\end{proof}

\subsection{Proof of lemma \ref{lemma:monotonesteps} {\bf Monotony of steps}}

\begin{lemma}[Monotonicity of steps]
\label{lemma:monotonesteps}
The number of steps increases with distance from the target cost level.\\
If Assumptions 1,2,3 and 4
then
\begin{flalign*}
\ \ \ \forall k_1 \leq k_2 \leq k &: steps_u(k_1,t) \leq steps_u(k_2,t)&
\end{flalign*}
\end{lemma}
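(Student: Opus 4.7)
The plan is to reduce the claim to the one-step comparison $steps_u(k'-1,t) \le steps_u(k',t)$ for each $k' \in \{t+1,\ldots,k\}$, from which the general inequality $steps_u(k_1,t)\le steps_u(k_2,t)$ follows by transitivity, and to establish this one-step version by strong induction on $k'$. The base case $k' = t+1$ is immediate from the definition, since $steps_u(t,t)=0$ while $steps_u(t+1,t) = imp^u_{r,p}(t+1) \ge 0$.

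For the inductive step I will work with the recursion in the compact form $P_{k'}\,steps_u(k',t) = 1 + \sum_{i<k'} Q_{k',i}\,steps_u(i,t)$, where $Q_{k',i} = p^u(i)\,r(k',k'-i)$ and $P_{k'} = \sum_{i<k'} Q_{k',i}$. Subtracting the recursion at $k'-1$ from the one at $k'$ yields
\[
P_{k'}\,steps_u(k',t) - P_{k'-1}\,steps_u(k'-1,t) = Q_{k',k'-1}\,steps_u(k'-1,t) + \sum_{i<k'-1}(Q_{k',i}-Q_{k'-1,i})\,steps_u(i,t).
\]

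The key input from Full $NSC(k)$ is that $Q_{k',i} \le Q_{k'-1,i}$ for every $i<k'-1$; since $p^u(i)$ is the common factor, this reduces to $r(k',k'-i) \le r(k'-1,k'-1-i)$, which I obtain by chaining the two Full NSC monotonicities: first $r(k',k'-i) \le r(k'-1,k'-i)$ from the monotonicity of $r$ in $k$, and then $r(k'-1,k'-i) \le r(k'-1,k'-1-i)$ from the monotonicity in $\delta$. Combining the non-positivity of each $Q_{k',i}-Q_{k'-1,i}$ with the inductive hypothesis $steps_u(i,t) \le steps_u(k'-1,t)$ for $i<k'-1$, each product $(Q_{k',i}-Q_{k'-1,i})\,steps_u(i,t)$ is bounded below by $(Q_{k',i}-Q_{k'-1,i})\,steps_u(k'-1,t)$, and a short telescoping calculation collapses the right-hand side above to $steps_u(k'-1,t)\,(P_{k'}-P_{k'-1})$. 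Rearranging gives $P_{k'}\,steps_u(k',t) \ge P_{k'}\,steps_u(k'-1,t)$, and dividing by $P_{k'}>0$ closes the induction.

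The main obstacle is the sign bookkeeping in this inductive step: the most natural subtraction of the two recursions yields only an \emph{upper} bound on $P_{k'}\,steps_u(k',t)$, and one has to exploit the non-positivity of the $Q_{k',i}-Q_{k'-1,i}$ together with the inductive hypothesis in the correct direction to convert it into the lower bound required. Once the double monotonicity of $r$ has been chained correctly and the negative-times-smaller-positive flip is tracked, the remaining manipulation is routine algebra.
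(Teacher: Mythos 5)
Your argument is correct, and it is organised differently from the paper's. The paper proves the one-step inequality by contradiction: it assumes $steps_u(k,t) < steps_u(k-1,t)$, rewrites the recursion at level $k$ so as to isolate the $\delta=1$ term, substitutes the assumed inequality there, and then invokes an auxiliary ``Step Lemma'' (which is where Full NSC and the induction hypothesis enter) to flip the inequality and reach the contradiction. You avoid both the reductio and the auxiliary lemma: writing the recursion in the normalised form $P_{k'}\,steps_u(k',t)=1+\sum_{i<k'}Q_{k',i}\,steps_u(i,t)$ and subtracting the instance at $k'-1$, the inductive step reduces to $Q_{k',i}\le Q_{k'-1,i}$ (the double monotonicity of $r$ under Full NSC) together with $0\le steps_u(i,t)\le steps_u(k'-1,t)$ (the strong induction hypothesis), after which the telescoping identity gives $P_{k'}\bigl(steps_u(k',t)-steps_u(k'-1,t)\bigr)\ge 0$. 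In effect your subtraction supplies, in a single direct estimate, the content the paper delegates to its Step Lemma; the paper's version, in turn, keeps the $\delta=1$ term explicit, which is what allows it to package the key input as a standalone lemma. One small repair: at the boundary term $i=0$ your chain begins with $r(k',k')\le r(k'-1,k')$, whose cost difference $k'$ exceeds the level $k'-1$ and so lies outside the range in which Full NSC constrains $r$; chain in the opposite order, $r(k',k')\le r(k',k'-1)\le r(k'-1,k'-1)$, which stays inside the stated domain (for $i\ge 1$ either order works). With that adjustment, and noting that $P_{k'}>0$ is already required for $steps_u(k',t)$ to be defined, the induction goes through.
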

\begin{proof}
The proof is by induction on $k$.
The base case $steps_u(t+1,t) \geq steps_u(t,t)$ is immediate because $steps_u(t+1,t)$ is non-negative and $steps_u(t,t)=0$.

For the inductive case we can assume that
$\forall t \leq i<k : steps_u(i,t) \geq steps_u(i-1,t)$,
from which we infer
\begin{flalign}
\forall t \leq j<k-1 : steps_u(k-1,t) \geq steps_u(j,t)
\label{eq:indass}
\end{flalign}

The key step is a lemma:
\begin{lemma} Step Lemma\\
Assuming FullNSC(k) and equation \ref{eq:indass}:\\
$$1 + \sum_{\delta=2}^{k-(t+1)} : p \times r(k,\delta) \times steps_u(\delta,t) \geq steps_u(k-1,t) \times \sum_{\delta=2}^{k} : r(k,\delta) \times  p(k-\delta) $$
\label{lemma:stepsu}
\end{lemma}
This lemma is proven below.

Note, first, that the expression for $steps_u(k,t)$ is equivalent to:
\begin{flalign*}
steps_u(k,t) & = 1 + (1 - \sum_{i<k} : p(i) \times r(k,k-i)) \times steps_u((k,t)&\\
            & + \sum_{i=t+1}^{k-1} r(k,k-i) \times p \times steps_u(i,t)&
\end{flalign*}

\noindent Assume, for contradiction, that:
\begin{flalign}
steps_u(k,t) & < steps_u(k-1,t)&
\label{eq:asssteps2}
\end{flalign}

From this it will be inferred that:\\
$steps_u(k,t) \geq steps_u(k-1,t)$
which establishes the inductive step.

\begin{flalign*}
steps_u(k,t) = & 1 + (1-\sum_{\delta=1}^k r(k,\delta) \times p(k-\delta)) \times steps_u(k,t)
             + r(k,1) \times p(k-1) \times steps_u(k-1,t) &\\
             & + \sum_{\delta=2}^{k-(t+1)} r(k,\delta) \times p(k-\delta) \times steps_u(k-\delta,t)&\\
(\rm{by} \ \ref{eq:asssteps2}) \ \  > & 1 + (1-\sum_{\delta=1}^k r(k,\delta) \times p(k-\delta)) \times steps_u(k,t)
            + r(k,1) \times p(k-1) \times steps_u(k,t) &\\
            & + \sum_{\delta=2}^{k-(t+1)} r(k,\delta) \times p(k-\delta) \times steps_u(k-\delta,t)&\\
= & 1 + (1- \sum_{\delta=2}^k r(k,\delta) \times p(k-\delta)) \times steps_u(k,t) &\\
     & \ \ +  \sum_{\delta=2}^{k-(t+1)} r(k,\delta) \times p \times steps_u(k-\delta,t)&\\
\therefore 
steps_u(k,t) \times & \sum_{\delta=2}^k r(k,\delta) \times p(k-\delta) &\\
    > & 
        1 + \sum_{\delta=2}^{(k-(t+1)} r(k, \delta) \times p  \times steps_u(k-\delta,t)& \\
(\rm{by} \ \ lemma \ \ \ref{lemma:stepsu}) \ \  \geq & 
steps_u(k-1,t) \times \sum_{\delta=2}^{k} r(k,\delta) \times p(k-\delta)  &\\
\therefore 
steps_u(k,t) & > steps_u(k-1,t) & \\
\end{flalign*}
The assumption $steps_u(k)<steps_u(k-1)$ implies $steps_u(k) > steps_u(k-1)$, and we conclude by contradiction that $steps_u(k) \geq steps_u(k-1)$
\end{proof}
\subsection{Proof of lemma \ref{lemma:gandt} {\bf Steps upper bound}}
\vspace{0.5cm}
\begin{lemma}[Steps upper bound]
\label{lemma:gandt}
Assuming 
$\bar{r}(k,t) \geq p^<(t+1)/p(t)$
it follows that
$$imp^u_{r,p}(k) \times (k-t) \leq blind(t)$$
\end{lemma}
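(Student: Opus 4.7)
The plan is to unwind the definitions so that the inequality becomes a purely algebraic statement about $r$ and $p^u$, and then exploit the monotonicity of $r(k,\cdot)$ guaranteed by $\mathit{Full\ NSC}(k)$ to bound the denominator of $imp^u_{r,p}(k)$ from below.

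First I would rewrite the goal. Since $blind(t) = 1/p^<(t+1)$ and $imp^u_{r,p}(k) = 1/\sum_{i=1}^{k} r(k,k-i)\, p^u(i)$, the inequality $imp^u_{r,p}(k)\times(k-t) \leq blind(t)$ is equivalent to
\begin{equation*}
\sum_{i=1}^{k} r(k,k-i)\, p^u(i) \;\geq\; (k-t)\, p^<(t+1).
\end{equation*}
So everything reduces to producing a lower bound of the stated form on the sum in the denominator.

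Next I would split the sum according to the two-piece definition of $p^u$, namely $p^u(i)=p(i)$ for $i\leq t$ and $p^u(i)=p(t)$ for $t<i\leq k$. The first piece, $\sum_{i=1}^{t} r(k,k-i)\,p(i)$, is non-negative and can simply be dropped. For the second piece I would substitute $\delta = k-i$ to convert
\begin{equation*}
\sum_{i=t+1}^{k} r(k,k-i)\, p(t) \;=\; p(t) \sum_{\delta=0}^{k-t-1} r(k,\delta).
\end{equation*}
The shifted sum $\sum_{\delta=0}^{k-t-1} r(k,\delta)$ differs from $(k-t)\,\bar{r}(k,t) = \sum_{\delta=1}^{k-t} r(k,\delta)$ only by swapping the endpoint term $r(k,k-t)$ for $r(k,0)$. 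By $\mathit{Full\ NSC}(k)$, $r(k,\cdot)$ is decreasing in its second argument, so $r(k,0)\geq r(k,k-t)$ and therefore $\sum_{\delta=0}^{k-t-1} r(k,\delta) \geq (k-t)\,\bar{r}(k,t)$.

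Finally I would apply the hypothesis $\bar{r}(k,t) \geq p^<(t+1)/p(t)$ to obtain
\begin{equation*}
p(t)\sum_{\delta=0}^{k-t-1} r(k,\delta) \;\geq\; p(t)\,(k-t)\,\bar{r}(k,t) \;\geq\; (k-t)\,p^<(t+1),
\end{equation*}
which, combined with the non-negativity of the dropped first piece, yields the desired bound and hence the lemma. There isn't really a hard step here; the only point that needs care is the index-shift in the substitution $\delta=k-i$ and the observation that $r(k,0)\geq r(k,k-t)$ is exactly what $\mathit{Full\ NSC}(k)$ provides, so the proof is essentially a short chain of inequalities once the goal is restated.
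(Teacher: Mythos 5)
Your proposal is correct and is essentially the paper's own argument: the paper also reduces the claim to the lower bound $\sum_i r(k,k-i)\,p^u(i) \geq (k-t)\,p(t)\,\bar{r}(k,t)$ obtained by discarding non-negative terms (this is equation \ref{eq:impbarr}), and then applies the hypothesis $\bar{r}(k,t) \geq p^<(t+1)/p(t)$ together with $blind(t)=1/p^<(t+1)$. The only difference is your choice of retained index window ($\delta = 0,\dots,k-t-1$ instead of $\delta = 1,\dots,k-t$), which costs you the harmless extra observation $r(k,0) \geq r(k,k-t)$ from $\mathit{Full\ NSC}(k)$ — an assumption available in the ambient hypotheses of the appendix even though the lemma statement lists only the condition on $\bar{r}(k,t)$.
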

\begin{proof}

\begin{flalign*}
(k-t) \times & imp^u_{r,p}(k) \\
& = 1/ (p(t) \times \bar{r}(k,t)) \ \ {\rm by \ \ equation \ \ \ref{eq:impbarr}}\\
& \leq 1/ (p^<(t+1)\\
& = blind(t)
\end{flalign*}
\end{proof}

\end{appendix}
\end{document}